\newlength\tablewidth
\setlist[enumerate]{leftmargin=*, label= {\arabic*.}, itemsep=0.5em}
\newcommand{\textds}[1]{{\footnotesize{\texttt{#1}}}}
\newcommand{\textfn}[1]{\texttt{\small{#1}}}
\newcommand{\textmn}[1]{\textsf{\small{#1}}}
\newcommand{\textgp}[1]{\texttt{\footnotesize{#1}}}
\newcolumntype{H}{>{\setbox0=\hbox\bgroup}c<{\egroup}@{}}
\newcommand{\textheader}[1]{{\bfseries{#1}}}
\newcommand{\cell}[2]{\setlength{\tabcolsep}{0pt}\begin{tabular}{#1}#2 \end{tabular}}
\definecolor{fgood}{HTML}{009E73}
\definecolor{fbad}{HTML}{D55E00}
\definecolor{funk}{HTML}{F0E442}
\newcommand{\ftblmidrules}{\hline}
\newcommand{\ftblheader}[3]{\multicolumn{#1}{#2}{\cell{#2}{#3}}}
\newcommand\clearrow{\global\let\rowmac\relax}
\definecolor{best}{HTML}{BAFFCD}
\definecolor{bad}{HTML}{FFC8BA}
\newcommand{\good}[1]{\cellcolor{fgood!60}#1} 
\newcommand{\violation}[1]{\cellcolor{fbad!60}#1}
\newcommand{\closecall}[1]{\cellcolor{funk!60}#1} 
\DeclareMathOperator*{\argmin}{argmin}
\newcommand{\sign}[1]{\textnormal{sign}\left(#1\right)}
\newcommand{\prob}[1]{\textrm{Pr}(#1)}
\newcommand{\R}{\mathbb{R}}
\newcommand{\E}[0]{\mathbb{E}}
\newcommand{\xb}{\bm{x}}
\newcommand{\X}{\mathcal{X}}
\newcommand{\Y}{\mathcal{Y}}
\newcommand{\txplus}[0]{+}
\newcommand{\txminus}[0]{-}
\newcommand{\nplus}[1]{n^{\txplus{}}_{#1}}
\newcommand{\nminus}[1]{n^{\txminus{}}_{#1}}
\newcommand{\n}[1]{n_{#1}}
\newcommand{\gb}{\bm{g}}
\newcommand{\G}{\mathcal{G}}
\newcommand{\rb}{\bm{r}}
\newcommand{\Hset}[0]{\mathcal{H}}
\newcommand{\truerisk}[2]{{R_{#1}(#2)}}
\newcommand{\emprisk}[2]{{\hat{R}_{#1}(#2)}}
\newcommand{\empgap}[3]{\hat{\Delta}_{#1}({#2},{#3})}
\newcommand{\truegap}[3]{\Delta_{#1}({#2}, {#3})}
\newcommand{\empgapmin}[1]{\hat{\epsilon}_{#1}}
\newcommand{\truegain}[3]{{\Delta_{#1}}({#2},{#3})}
\newcommand{\empgain}[3]{{\hat{\Delta}_{#1}}({#2},{#3})}
\newcommand{\clf}[0]{h_0}
\newcommand{\plf}[1]{h_{#1}}
\newcommand*\samethanks[1][\value{footnote}]{\footnotemark[#1]}
\title{When Personalization Harms Performance:\\Reconsidering the Use of Group Attributes in Prediction}
\author{%
    \name Vinith M. Suriyakumar \email vinithms@mit.edu\\
     \addr Massachusetts Institute of Technology 
    \AND 
    \name Marzyeh Ghassemi\thanks{Equal Supervision}\email mghassem@mit.edu\\
    \addr Massachusetts Institute of Technology 
    \AND
    \name Berk Ustun\samethanks \email berk@ucsd.edu\\
    \addr University of California, San Diego
}%
\begin{document}

\maketitle

\begin{abstract}
Machine learning models are often personalized with categorical attributes that define groups. In this work, we show that personalization with \emph{group attributes} can inadvertently reduce performance at a \emph{group level} -- i.e., groups may receive unnecessarily inaccurate predictions by sharing personal characteristics.
We present formal conditions to ensure the \emph{fair use} of group attributes in a prediction task that can be checked by training one additional model. 
We characterize how fair use conditions can be violated due to standard practices in model development, and study their prevalence in clinical prediction tasks.
Our results show that personalization often fails to produce a tailored performance gain for every group that reports personal data, and underscore the need to check fair use when personalizing models with characteristics that are protected, sensitive, self-reported, and costly to acquire.
\end{abstract}

\iftoggle{arxiv}{\vspace{0.5em}
\begin{keywords}
 personalization; clinical decision support; algorithmic fairness; classification.
\end{keywords}
}

\section{Introduction}
\label{Sec::Introduction}

Machine learning models are often used to assign predictions to people -- be it to predict if a patient has a rare disease, the risk that a consumer will default on a loan, or the likelihood that a student will matriculate. 

Models in such applications are often \emph{personalized} to target heterogeneous subgroups. In the most common approach, models are trained with \emph{group attributes} -- i.e., categorical attributes that define groups. In consumer finance, credit scores may include group attributes that are \emph{protected} such as \textds{age\_group}~\citep[][]{ECOA20}. In medicine, clinical prediction models may include group attributes that are \emph{sensitive} (e.g., \textgp{AIDS} as in the \href{https://www.mdcalc.com/calc/4044/simplified-acute-physiology-score-saps-ii}{SAPS II Score}), \emph{self-reported} (e.g. \textgp{sexual\_practices} as in the \href{https://www.mdcalc.com/calc/10046/denver-hiv-risk-score}{Denver HIV Risk Score}), or \emph{costly to acquire} (e.g., \href{https://www.osmind.org/knowledge-article/measurement-based-care-mental-health-patient-rating-scales}{Brief Psychiatry Rating Scale}). 

The widespread use of group attributes in modern prediction models reflects a belief that personalization can only improve performance. In effect, practitioners who develop clinical prediction models include protected attributes like \textgp{race} because they believe it can only improve performance~\citep[][]{kent2020predictive}. Likewise, individuals report sensitive attributes like \textds{sexual\_practices} to a self-reported screening tool because they expect to receive more accurate predictions. 

In this work, we formalize these expectations through a principle that we call \emph{fair use} –- i.e., that every person who reports personal characteristics should expect a tailored gain in performance in return. Given a model that is personalized with group attributes, we then test that it satisfies these minimal expectations. First, by testing that every group expects more accurate predictions from a personalized compared to a \emph{generic model} trained without their group attributes. Next, by testing that the gains are \emph{tailored}, meaning that every group prefers their personalized predictions to predictions personalized for any other group.

The vast majority of machine learning models are not designed to ensure fair use (see \cref{Fig::FairUseViolationPractice}). This result stems from the fact that standard approaches to empirical risk minimization use group attributes to improve performance at a \emph{population level}. As we will show, the resulting models may assign unnecessarily inaccurate predictions at the group level due to routine decisions such as model specification and model selection (see \cref{Fig::FailureModeMisspecification}). 

In practice, however, these fair use violations may inflict harm. In clinical applications, for example, inaccurate predictions lead to worse decisions and health outcomes~\citep{vyas2020hidden}. More broadly, these effects are silent and avoidable. Silent as fair use violations would only draw attention if we were to evaluate the \emph{gains of personalization} for \emph{intersectional} groups. Avoidable because a fair use violation implies that a group could receive better predictions from a generic model or a personalized model for another group. Thus, one could resolve a fair use violation by assigning predictions from this better-performing model.

Our goal in this work is to expose this effect and lay the foundations to address it. Our main contributions include:
\begin{enumerate}[itemsep=0.005em,topsep=-0.01em]

\item We present formal conditions to ensure the fair use of group attributes in prediction task. Our conditions reflect collective preference guarantees that are necessary for truthful self-reporting, and that can be tested by training one additional model.

\item  We characterize how empirical risk minimization with group attributes can violate fair use. Our analysis includes counterexamples and sufficient conditions that illustrate failure modes in model development and inform interventions to mitigate their effects.

\item We conduct a comprehensive empirical study on fair use violations in clinical prediction tasks, showing their prevalence across major model classes, personalization techniques, and prediction tasks.

\item We present a case study on personalization for a model trained to predict mortality for patients with acute kidney injury. Our study shows how a fair use audit can safeguard against incorrect ``race correction" in clinical prediction models, and presents targeted interventions that reduce harm.

\end{enumerate}

\newcommand{\titlecolumn}[3]{\multicolumn{#1}{#2}{#3}}

\begin{figure}[!t]
    \centering
    \setlength{\tablewidth}{\iftoggle{arxiv}{0.4\textwidth}{\tablewidth}}
    
    \resizebox{\tablewidth}{!}{
    \begin{tabular}{@{}rr*{2}{r}r}
    \titlecolumn{1}{c}{Group} & 
    \titlecolumn{1}{c}{Size} & 
    \titlecolumn{2}{c}{Training Error} &  \titlecolumn{1}{c}{Gain} \\ 
    \cmidrule(lr){1-1}\cmidrule(lr){2-2}\cmidrule(lr){3-4} \cmidrule(lr){5-5} 
    \multicolumn{1}{c}{$\gb{}$} & $\n{\gb}$ &  $\truerisk{}{\clf{}}$ &  $\truerisk{\gb}{\plf{}}$ & $\truegain{\gb}{\plf{}}{\clf{}}$  \\
    \cmidrule(lr){1-1} \cmidrule(lr){2-2}  \cmidrule(lr){3-4}  \cmidrule(lr){5-5} 
    \textgp{female, <30} &   48 &               38.1\% &                    26.8\% &        \good{11.3\%} \\ 
    \textgp{male, <30} &   49 &              23.9\% &                    26.7\% &        \violation{-2.8\%} \\ 
    \textgp{female, 30 to 60} &   304 &               30.3\% &                    29.1\% &        \good{1.2\%} \\ 
    \textgp{male, 30 to 60} &   447 &               15.4\% &                    15.2\% &        \good{0.2\%} \\ 
    \textgp{female, 60+} &   123 &               19.3\% &                    21.9\% &        \violation{-2.6\%} \\        
    \textgp{male, 60+} &   181 &               11.0\% &                    8.2\% &       \good{2.8\%} \\            
    \cmidrule(lr){1-1} \cmidrule(lr){2-2}  \cmidrule(lr){3-4}  \cmidrule(lr){5-5} 
    \textbf{Total} &  \color{black}{1,152} &               \color{black}{20.4\%} &                    \color{black}{19.4\%} &  \cellcolor{fgood!60} 1.0\% 
    \end{tabular}
    }
    \caption{Personalization can reduce performance at the group level. We train a personalized model $\plf{\gb}$ and generic model $\plf{0}$ with logistic regression, personalizing $\plf{\gb}$ with a one-hot encoding of $\textgp{sex}$ and $\textgp{age\_group}$ to screen for obstructive sleep apnea (see the \textds{apnea} dataset in \cref{Sec::Experiments}). As shown, personalization reduces training error at a population level from 20.4\% to 19.4\% yet \emph{increases} error for two groups: (\textgp{female, 60+}) and (\textgp{male, <30}). These effects are also present on test data.}
    \label{Fig::FairUseViolationPractice}
\end{figure}

\paragraph{Related Work}

Personalization encompasses a broad range of techniques that use personal data. Here, we use it to describe techniques that target \emph{groups} rather than \emph{individuals} -- i.e., ``categorization" rather than ``individualization" as per the taxonomy of \citet{fan2006personalization}. Modern approaches to personalization with group attributes use them to improve population-level performance by, e.g., automatically including higher-order interaction effects~\citep{bien2013lasso,lim2015learning,vaughan2020efficient} or recursively partitioning data~\citep{elmachtoub2018value,biggs2020model,bertsimas2019optimal,bertsimas2020predictive}. In practice, few works measure the gains of personalization, and those that do measure the gains at a population level rather than a group level~\citep[see e.g.,][]{jaques2016multitask,taylor2017personalized}. 

We introduce conditions for models that use group attributes to assign more accurate predictions. Much work in algorithmic fairness discusses the need for models to account for group membership~\citep[][]{zafar2017parity,dwork2017decoupled,corbett2018measure,kleinberg2018algorithmic,lipton2018disparate,wang2019repairing}, observing that it is otherwise impossible for a model to perform equally well for all groups~\citep[][]{hardt2016equality,zafar2017fairness,zafar2015fairness,feldman2015certifying,agarwal2018reductions,narasimhan2018learning,celis2019classification}. These results highlight the need to account for group attributes in personalization. Nevertheless, methods to equalize performance are ill-suited for personalization because they can equalize performance by assigning less accurate predictions to groups for whom the model performs well, rather than by assigning more accurate predictions to groups for whom the model performs poorly~\citep[][]{lipton2018disparate,hu2019fair,pfohl2019creating,martinez2019fairness,martinez2020minimax}.

We build on the work of~\citet{ustun2019fairness}, who propose the preference guarantees of rationality and envy-freeness~\citep[see also][]{zafar2017parity}. Their work develops a recursive decoupling algorithm that uses preference guarantees to guide decoupling~\citep[c.f.,][]{dwork2017decoupled,alabi2018unleashing}. In contrast, we study these guarantees as standalone conditions to ensure personalization without harm. Our work complements an emerging stream on fair use in prediction models~\citep[see e.g.,][]{monteiro2022epistemic,james2023participatory}. More broadly, it highlights a practical application for preference-based notions of fairness~\citep{zafar2017parity,ustun2019fairness,kim2019preference,viviano2020fair,do2021online}, and represents a new use case to evaluate model performance across intersectional groups~\citep[c.f.,][]{kearns2017preventing,hebert2018multicalibration,globus2022algorithmic,wang2022towards}.

\begin{figure}[!t]
\centering
\setlength{\tablewidth}{0.5\textwidth}
\resizebox{\tablewidth}{!}{
\begin{tabular}{rr>{\;}r>{\;}r>{\;}rrrr}
    \titlecolumn{1}{c}{Group} &
    \titlecolumn{2}{c}{Data} &
    \titlecolumn{2}{c}{Personalized}  &
    \titlecolumn{2}{c}{Generic}&  
    \titlecolumn{1}{c}{Gain}\\
    \cmidrule(lr){1-1}\cmidrule(lr){2-3}\cmidrule(lr){4-5}\cmidrule(lr){6-7}\cmidrule(lr){8-8}
     \multicolumn{1}{c}{$\gb{}$} & 
     $\nplus{\gb}$ & $\nminus{\gb}$  & 
     $\plf{}$ & $R_{\gb}(\plf{})$ &
     $h_0$ & $R_{\gb}(\clf{})$ & 
     $\truegain{\gb}{\plf{}}{\clf{}}$\\
     \cmidrule(lr){1-1}\cmidrule(lr){2-3}\cmidrule(lr){4-5}\cmidrule(lr){6-7}\cmidrule(lr){8-8}
     $\textgp{female, young}$ & 
     $0$ & $24$ &
     $+$ & $24$ & 
     $-$ & $0$ &
     \cellcolor{fbad!60} $-24$ \\       
     $\textgp{male, young}$ & 
     $25$ & $0$ &
     $+$ & $0$ & 
     $-$ &  $25$ &
     \cellcolor{fgood!60} $25$ \\
     $\textgp{female, old}$ & 
     $25$ & $0$ & 
     $+$ & $0$ & 
     $-$ & $25$ &
     \cellcolor{fgood!60} $25$ \\
    $\textgp{male, old}$ 
     & $0$ & $27$ &
     $-$ & $0$ & 
     $-$ & $0$ &
     \cellcolor{funk!60} $0$ \\
     \cmidrule(lr){1-1}\cmidrule(lr){2-3}\cmidrule(lr){4-5}\cmidrule(lr){6-7}\cmidrule(lr){8-8}
     \multicolumn{1}{r}{\bfseries Total} & 
     50 & 51 & & 24  & & 50 & \cellcolor{fgood!60} 26 
\end{tabular}
}
\caption{Stylized classification task where the best personalized model reduces performance for a group due to model misspecification. There are $\nplus{} = 50$ positive and $\nminus{} = 51$ negative examples. We train a personalized linear classifier with a one-hot encoding of $\gb \in \{\textgp{male}, \textgp{female}\} \times \{\textgp{old}, \textgp{young}\}$, and evaluate the gains to personalization with respect to a generic model $h_0$ without group attributes.
Personalization reduces overall error from 50 to 24. However, not all groups gain from personalization: $(\textgp{young, female})$ receives less accurate predictions and $(\textgp{old, male})$ receives no gain.} 
\label{Fig::FailureModeMisspecification}
\vspace{-0.2em}
\end{figure}

\section{Fair Use Conditions}
\label{Sec::ProblemStatement}

We present formal conditions for the fair use of group attributes in prediction tasks.

\paragraph{Preliminaries}

We start with a dataset $(\xb_i, y_i, \gb_i)_{i=1}^n$, where example $i$ consists of a feature vector $\xb_i=[x_{i,1}, \ldots, x_{i,d}] \in \R^{d}$, a label $y_i \in \Y$, and $k$ categorical \emph{group attributes} $\gb_i = [g_{i,1},\ldots,g_{i,k}] \in \G_1 \times \ldots \times \G_k = \G$. We refer to $\gb_i$ as the \emph{group membership} of person $i$. For example, a female over 60 would have $\gb_i = [\textgp{female}, \,\textgp{age}\geq\textgp{60}]$. We let $\n{\gb} := |\{i ~| \gb_i = \gb \}|$ denote the size of group $\gb$, and $m := |\G|$ denote the number of intersectional groups. 

We use the data to train a \emph{personalized} model with group attributes $\plf{}: \X \times \G \to \Y$; and a  \emph{generic} model that does not $\clf{}:\X \to \Y$. We train all models via ERM with a loss function $\ell:\Y\times\Y\to\R_+$, denoting the empirical and true risks as $\emprisk{}{h}$ and $\truerisk{}{h}$, respectively. We assume that the personalized and generic models represent the empirical risk minimizers on datasets with group attributes $(\xb_i, y_i, \gb_i)_{i=1}^n$ and without them $(\xb_i,y_i)_{i=1}^n$:
\begin{align*}
    \plf{} \in \argmin_{h \in \Hset} \emprisk{}{h}\qquad \clf{} \in \argmin_{h \in \Hset_0}\emprisk{}{h}
\end{align*}
Here, $\Hset$ and $\Hset_0$ denote the class of personalized models and generic models respectively.

We measure the \emph{gains of personalization} for a personalized model $\plf{}$ for each group. As part of this evaluation, we measure how the model will perform for group $\gb$ when they are assigned the predictions personalized for a different group -- i.e., the predictions that they could receive by ``misreporting" their group membership as $\gb'$. 
Given a personalized model $\plf{}$, we denote its \emph{empirical risk} and \emph{true risk} for group $\gb$ when they report $\gb'$ as:
\iftoggle{arxiv}{
\begin{align*}
\emprisk{\gb}{\plf{\gb'}} &:= \frac{1}{\n{\gb}} \sum_{i:\gb_i = \gb} \ell\left( h(\xb_i, \gb'),y_i \right) \qquad \truerisk{\gb}{\plf{\gb'}} := \E\left[\ell\left(h(\xb,\gb'),y \right) \mid \G=\gb\right].
\end{align*}
}{\begin{align*}
\emprisk{\gb}{\plf{\gb'}} &:= \frac{1}{\n{\gb}} \sum_{i:\gb_i = \gb} \ell\left( h(\xb_i, \gb'),y_i \right)\\
\truerisk{\gb}{\plf{\gb'}} &:= \E\left[\ell\left(h(\xb,\gb'),y \right) \mid \G=\gb\right].
\end{align*}
}
We use $\plf{\gb{}'} := h(\cdot, \gb{}')$ to denote a personalized model where group attributes are fixed to $\gb{}'$.

We assume that each group prefers models that assign more accurate predictions as measured in terms of true risk, and evaluate the preferences of group $\gb$ between $h$ and $h'$ using the \emph{gain} measure: $\truegap{\gb}{h}{h'} := \truerisk{\gb}{h'} - \truerisk{\gb}{h}$. This is a plausible assumption in settings where models are used to assign personalized predictions. It does not hold in settings where individuals may prefer models that ~\citep[see e.g., polar prediction tasks][]{paulus2020predictably}.

\paragraph{As Collective Preference Guarantees}

In Definition~\ref{Def::FairUse}, we characterize the fair use of a group attribute in terms of collective preference guarantees. 
\begin{definition}
\label{Def::FairUse}
A personalized model $\plf{}: \X \times \G \to \Y$ guarantees the fair use of group attributes $\G$ if it obeys: 
\begin{align}
\truegap{\gb}{\plf{\gb}}{h_0} &\geq 0 &\; \textrm{for all groups}\; \gb \in \G, \label{Def::Rationality} \\ 
\truegap{\gb}{\plf{\gb}}{\plf{\gb'}} &\geq 0 &\;\textrm{for all groups}\; \gb, \gb' \in \G \label{Def::Envyfreeness} 
\end{align}
\end{definition}
These conditions are collective in that performance is measured over individuals in a group. 
Here, condition \eqref{Def::Rationality} ensures \emph{rationality} for group $\gb$ -- i.e., that a majority of group $\gb$ prefers a personalized model $\plf{\gb}$ to a generic model $\clf{}$. Condition \eqref{Def::Envyfreeness} ensures \emph{envy-freeness} for group $\gb$ -- i.e., that majority of group $\gb$ prefers their personalized predictions to the personalized predictions for any other group. These conditions reflect minimal expectations of groups from a personalized model.

These conditions can be adapted to different supervised learning tasks by choosing a suitable risk metric. Since fair use conditions reflect the expected gains from personalization, a ``suitable" metric should represent an exact measure of model performance rather than a surrogate measure optimized for training. In classification tasks where we want accurate predictions, this would be the error rate. In tasks where we want reliable risk estimates, it would be the expected calibration error~\citep{naeini2015obtaining}.

\paragraph{As Prerequisites for Truthful Self-Reporting}

In copyright law, fair use conditions characterize when we can use copyrighted material without permission from copyright owners~\citep{yankwich1954fair,netanel2011making}. In this setting, fair use conditions characterize when we can use personal data without asking permission from the owners of that data. In particular, fair use conditions are necessary for ``truthful self-reporting"~\citep[see e.g.,][]{savage1971elicitation,jovanovic1982truthful,gneiting2007strictly}. 

\begin{proposition}
\label{Rem::TruthfulSelfReporting}
Consider a prediction model where each person reports their group membership to a personalized model $\plf{}: \X \times \G \to \Y$ in deployment. Denote the reported group membership of person $i$ as:
\begin{equationarray}{lclcl}
    \rb_i & = & \gb_i                   & \Leftrightarrow &~\textrm{$i$ reports truthfully}  \nonumber \\
    \rb_i & \in & \G\setminus \{\gb_i\}  & \Leftrightarrow & ~\textrm{$i$ misreports} \nonumber \\
    \rb_i & = & ?                       & \Leftrightarrow & ~\textrm{$i$ withholds}  \nonumber
\end{equationarray}
If a personalized model guarantees the fair use of $\G$ then each person would choose to report truthfully as this strategy would maximize their expected performance: $$\gb_i \in \argmin_{\rb_i \in \G \cup \{?\}} \E\left[\ell\left(h(\xb,\rb_i),y_i \right)\mid\G=\gb_i\right].$$
\end{proposition}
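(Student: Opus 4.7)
The plan is to rewrite the minimization over $\rb_i \in \G \cup \{?\}$ as a comparison of three expected losses, one for each type of reporting strategy, and then to read off the inequalities directly from the two fair use conditions in \cref{Def::FairUse}. The key observation is that, conditional on $\G = \gb_i$, the objective $\E[\ell(h(\xb,\rb_i),y_i)\mid\G=\gb_i]$ equals $\truerisk{\gb_i}{\plf{\rb_i}}$ whenever $\rb_i \in \G$ and equals $\truerisk{\gb_i}{\clf{}}$ whenever $\rb_i = ?$, since in the latter case the model applied is the generic one that does not consume group attributes.

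First, I would split the candidate reports into three cases: $\rb_i = \gb_i$ (truthful), $\rb_i = \gb' \in \G \setminus \{\gb_i\}$ (misreport), and $\rb_i = ?$ (withhold). For the misreporting case, envy-freeness applied at $\gb = \gb_i$ gives
\begin{equation*}
\truerisk{\gb_i}{\plf{\gb'}} - \truerisk{\gb_i}{\plf{\gb_i}} = \truegap{\gb_i}{\plf{\gb_i}}{\plf{\gb'}} \geq 0 \quad \text{for every } \gb' \in \G,
\end{equation*}
so truthful reporting weakly dominates every misreport. For the withholding case, rationality applied at $\gb = \gb_i$ yields
\begin{equation*}
\truerisk{\gb_i}{\clf{}} - \truerisk{\gb_i}{\plf{\gb_i}} = \truegap{\gb_i}{\plf{\gb_i}}{\clf{}} \geq 0,
\end{equation*}
so truthful reporting weakly dominates withholding.

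Combining the two inequalities shows that $\truerisk{\gb_i}{\plf{\gb_i}} \leq \truerisk{\gb_i}{\plf{\rb_i}}$ for every $\rb_i \in \G$ and $\truerisk{\gb_i}{\plf{\gb_i}} \leq \truerisk{\gb_i}{\clf{}}$, which together give $\gb_i \in \argmin_{\rb_i \in \G \cup \{?\}}\E[\ell(h(\xb,\rb_i),y_i)\mid\G=\gb_i]$, as required. The main subtlety, rather than any obstacle, is pinning down the semantics of withholding: the argument only goes through if opting out means being evaluated under the generic model $\clf{}$ that was introduced in the preliminaries, so I would flag this convention explicitly at the start of the proof to make the reduction to \eqref{Def::Rationality} and \eqref{Def::Envyfreeness} unambiguous.
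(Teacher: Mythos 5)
Your proof is correct: the paper itself states this proposition without an explicit proof, treating it as an immediate consequence of \cref{Def::FairUse}, and your case split (truthful vs.\ misreport vs.\ withhold) with envy-freeness handling misreports and rationality handling withholding is precisely that intended derivation. Your flag about the semantics of withholding --- that $\rb_i = ?$ must mean being scored under the generic model $\clf{}$ --- is a genuine gap in the paper's own setup that your proof correctly makes explicit.
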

Truthful self-reporting incentives reflect basic principles regarding \emph{consent} in data privacy. In effect, a personalized model that violates fair use uses group membership in a way that is coercive. If groups were allowed to report personal information to a personalized model at prediction time, group who experience a fair use violation would not report group membership voluntarily or truthfully, choosing to withhold or misreport instead. If a model obeys fair use, individuals may still withhold group membership because the gain is insufficient. In light of this, fair use conditions should be viewed as minimal requirements to flag harm rather than a ``rubber stamp" for consent.

\paragraph{Use Cases} 

Fair use conditions should hold in prediction tasks where individuals are entitled to control or report their own data.
In such tasks, we should ensure fair use conditions for group attributes that encode:
\begin{itemize}[leftmargin=0pt,label={},itemsep=0.25em,topsep=0.1em]
    
    \item {\em Immutable Attributes}: Group attributes often encode characteristics like \textgp{sex}~\citep[see e.g.,][]{paulus2016field}. In this setting, fair use conditions ensure that individuals will not receive unnecessarily inaccurate predictions due to immutable characteristics.

    \item {\em Sensitive Information}: Models that use attributes like $\textgp{hiv\_status}$ should guarantee a tailored gain in performance for the sensitive group, $\textgp{hiv\_status}=\textgp{+}$. Otherwise, they require individuals to disclose information that may be harmful when leaked~\citep[see e.g.,][]{bansal2010impact}. 
    
    \item {\em Self-Reported Information}: Models are often personalized using information that individuals report directly -- see e.g., self-reported screening tests for mental illnesses~\citep[ ][]{kessler2005world,ustun2017world}. These models should obey fair use conditions to incentivize truthful self-reporting as per \cref{Rem::TruthfulSelfReporting}.

    \item {\em Costly Information}: Group attributes can encode characteristics that must be collected at test time -- e.g., an attribute like \textgp{pcr\_test} whose value requires a medical test. Models that ensure fair use with respect to \textgp{pcr\_test} guarantee that patients with a specific outcome will not receive a less accurate prediction after taking a test.
    
\end{itemize}

\paragraph{Testing for Fair Use}
\label{Sec::HypothesisTest}
We can evaluate fair use conditions by training a generic model in addition to a personalized model. Given a personalized model and its generic counterpart, we can check the conditions in \cref{Def::FairUse} on a sample by computing the relevant performance gains. This procedure will return point estimates that should be paired with a measure of uncertainty to guide model development. In some tasks, a significant fair use violation may warrant a new model. In others, we may wish to ensure a significant gain to use a group attribute in the first place. 

In practice, we check for a rationality violation using a one-sided hypothesis test of the form:
\begin{align*}
H_0: \truerisk{\gb}{\clf{}} - \truerisk{\gb}{\plf{\gb}} \leq 0\\
H_A: \truerisk{\gb}{\clf{}} - \truerisk{\gb}{\plf{\gb}} > 0
\end{align*}
Here, the null hypothesis $H_0$ assumes that group $\gb$ prefers $h_{\gb}$ to $h_0$. Thus, we would reject $H_0$ when there is enough evidence to support a rationality violation for $\gb$ on held-out data. %

We can test all conditions in \cref{Def::FairUse} by repeating this test for all $m$ groups to check rationality, and repeating analogous tests for all $m(m-1)$ pairs of groups to check envy-freeness. In general, one can test these hypotheses for any performance metric using a bootstrap hypothesis test~\citep{diciccio1996bootstrap}, and control the false discovery rate using a Bonferroni correction~\citep[][]{dunn1961multiple}. In practice, one should draw on more powerful tests when working with salient performance metrics~\citep[e.g., the McNemar test for accuracy][]{dietterich1998approximate}.

\section{Failure Modes and Guarantees}
\label{Sec::FailureModes}

Practitioners naturally presume that training a model with group attributes will provide a uniform performance gain for all groups. Here, we characterize how empirical risk minimization may fail to improve performance at a group level through counterexamples and sufficient conditions. We include additional examples and proofs in \cref{Appendix::Theory}.

\subsection{Failure Modes}

We characterize common practices that lead personalization to reduce performance at a group level. We present examples related to model misspecification and model selection as they motivate interventions for model development in \cref{Sec::Experiments}. We include examples related to generalization, distributional shifts, and training with a surrogate loss function in \cref{Appendix::FailureModes}.

\newcommand{\addplot}[1]{%
\cell{c}{%
\includegraphics[trim=0.0in 0.0in 0.0in 0.0in,clip, width=0.3\linewidth]{#1}}%
}

\newcommand{\addplotgen}[1]{%
\cell{c}{%
\includegraphics[trim=0.1in 0.3in 0.0in 0.0in,clip, width=0.255\linewidth]{#1}}%
}

\begin{figure*}[t]
\centering
\begin{tabular}{@{}ccc@{}}
\textsf{Bayes Optimal} & 
\textsf{Personalized} & 
\textsf{Generic} \\
\addplot{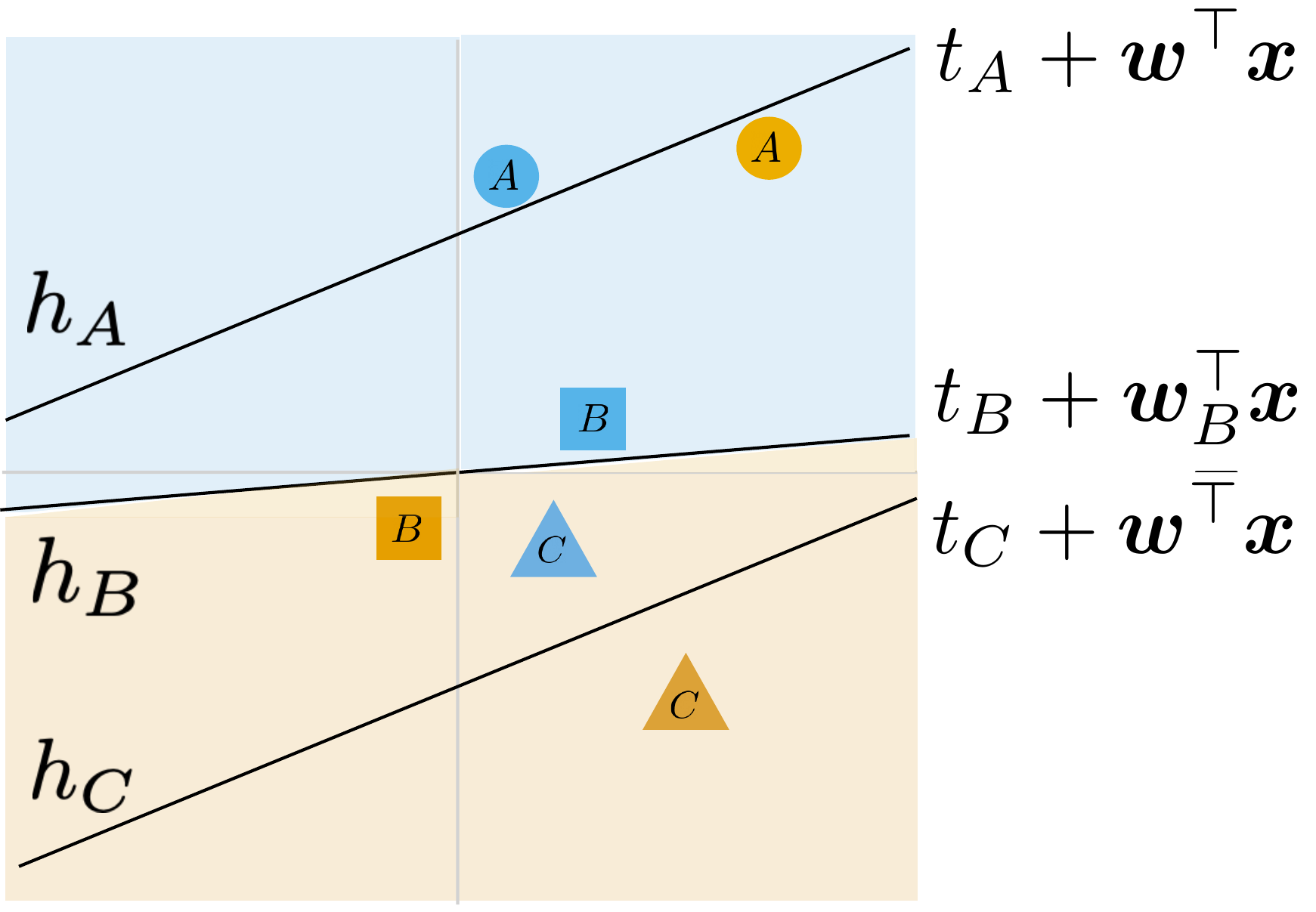} &
\addplot{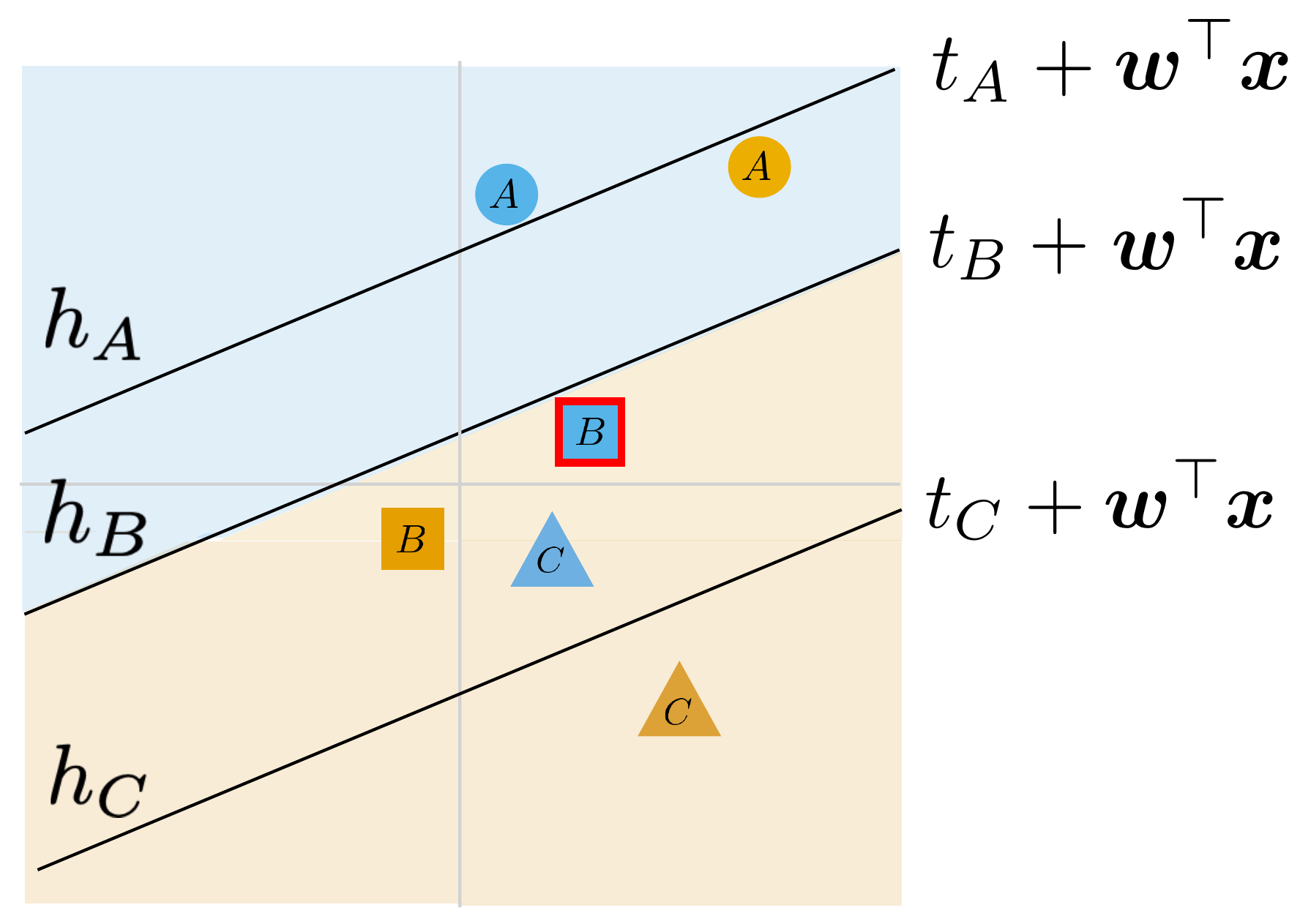} &
\addplotgen{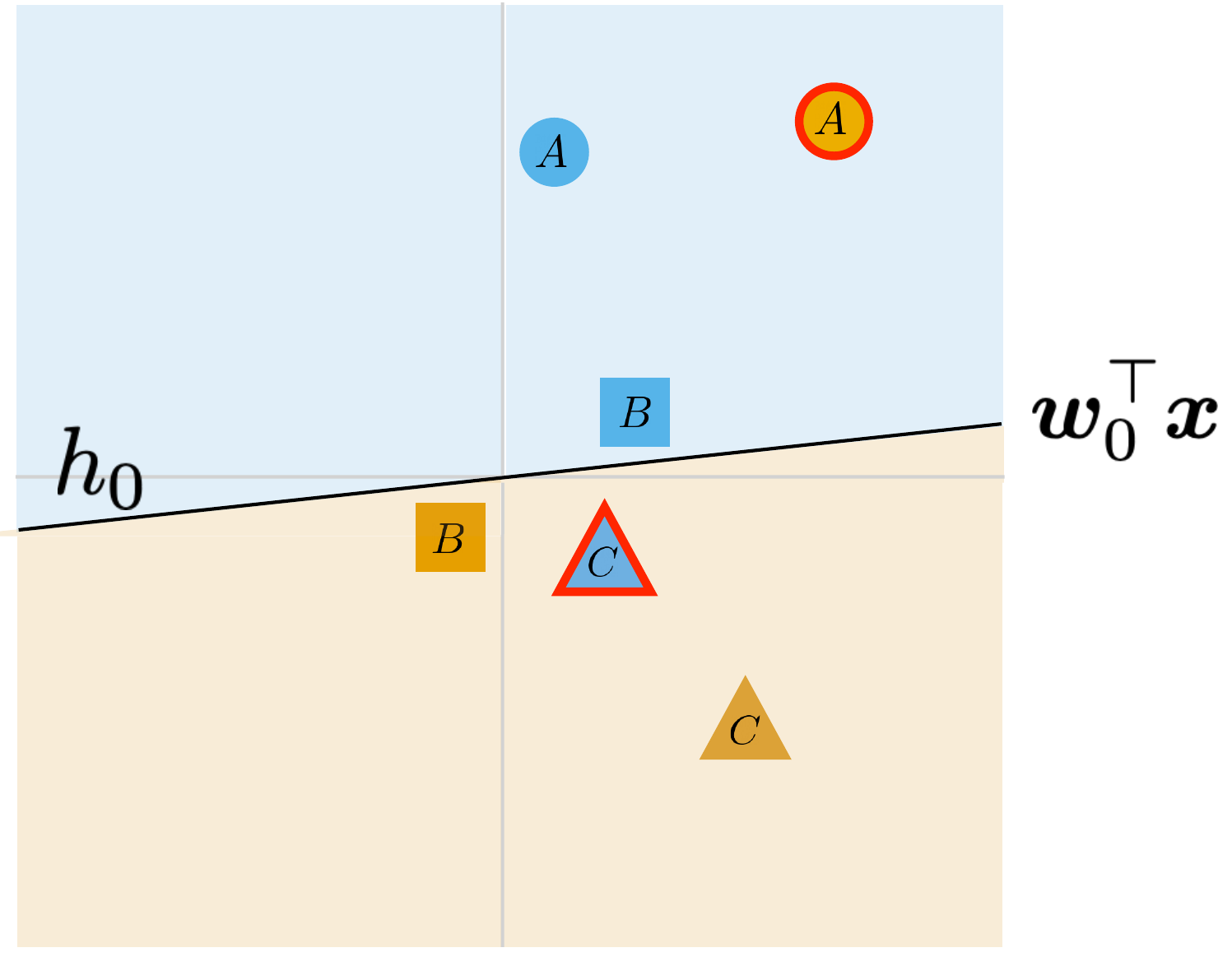}
\end{tabular}
    \caption{ERM returns a misspecified personalized model that assigns a personalized intercept for each group but the same slope for all groups. It does not capture the personalized slope needed to accurately model group B. The model improves overall performance by assigning more accurate predictions to groups $A$ and $C$. However, it performs \emph{worse} for group $B$.}
    \label{Fig::FMGroupSpecificEffects}
\end{figure*}

\paragraph{Misspecification}

We start with misspecification -- i.e., when a model that cannot capture the influence of group membership in a conditional data distribution. A common form of misspecification occurs when we personalize simple models with a one-hot encoding~\citep{van2022harm}. In such cases, models exhibit fair use violations on data distributions that exhibit \emph{intersectionality}. Consider, for example, a logistic regression model with a one-hot encoding that assigns higher risk to patients who are \textgp{young}, and to patients who are \textgp{female}. This model would exhibit a fair use violation for patients who are \textgp{young} \emph{and} \textgp{female} if their true risk were lower due to an interaction effect among group attributes (see \cref{Fig::FailureModeMisspecification}). 

Misspecification can also stem from group-specific interaction effects -- e.g., tasks where group attributes act as mediators or moderators~\citep[see e.g.,][]{baron1986moderator}. In \cref{Example::FMGroupSpecificEffects}, we show an example that exhibits the hallmarks of personalization: a generic model performs poorly on ``heterogeneous" groups $A$ and $C$, and a personalized model that targets these groups improves performance at a population-level. 
\begin{example}
\label{Example::FMGroupSpecificEffects}
Consider a 2D classification task with groups $\G = \{A, B, C\}$ with 1 positive and 1 negative example in which a Bayes optimal classifier $\plf{}:\X \times \G \to \Y$ should assign a personalized intercept to each group \emph{and} a personalized slope to group $B$:
\begin{align*}
    h(\xb,\gb) = 
    \begin{cases}
    \sign{t_A + \bm{w}^\top \xb} & \textrm{if}~ \gb = A\\
    \sign{t_B + \bm{w}_{B}^\top \xb} & \textrm{if}~ \gb = B\\
    \sign{t_C + \bm{w}^\top\xb}
    & \textrm{if}~ \gb = C
    \end{cases}
\end{align*}

Here, ERM with a standard one-hot encoding of $\G$ would return a personalized model that assigns a personalized intercept for each group, but the same slope to all three groups:
\begin{align*}
    h(\xb,\gb) = 
    \begin{cases}
    \sign{t_A + \bm{w}^\top \xb} & \textrm{if}~ \gb = A\\
    \sign{t_B + \bm{w}^\top \xb} & \textrm{if}~ \gb = B\\
    \sign{t_C + \bm{w}^\top\xb}
    & \textrm{if}~ \gb = C
    \end{cases}
\end{align*}
The model would improve overall performance by assigning more accurate predictions to groups $A$ and $C$. However, it would perform \emph{worse} for group $B$ (see \cref{Fig::FMGroupSpecificEffects}).
\end{example}
Resolving violations from model misspecification is difficult since it requires interventions that can resolve them for all groups. In practice, one could fit models from a class that is rich enough to capture these effects, or train a separate model for each group. Both approaches are challenging when working with multiple groups. The first requires that we either specify interactions for each group and fit these terms correctly. The second requires that we train models using a limited amount of data for each group. 
 
\paragraph{Model Selection}

Model development often involves choosing a model from candidate models -- e.g., when setting a regularization penalty to avoid overfitting or to induce sparsity. Common criteria for model selection guide these decisions on the basis of population-level performance~\citep[e.g., mean K-CV test error][]{arlot2010survey}. As shown in \cref{Fig::FailureModeFeatureSelection}, the resulting model may improve performance for one group while reducing performance for another group in tasks with heterogeneous data distributions.

\begin{example}
\label{Fig::FailureModeFeatureSelection}
Consider a classification task where a personalized model must use either $x_1 \in \{0,1\}$ or $x_2 \in \{0,1\}$. We are given 60 examples from group $A$ and 90 examples from group $B$. We train a personalized model with a one-hot encoding of $\G = \{A,B\}$ choosing between $x_1$ or $x_2$ to minimize the overall error rate.
\begin{center}\normalfont
\setlength{\tablewidth}{\iftoggle{arxiv}{0.8\textwidth}{\textwidth}}
\renewcommand{\ftblmidrules}[0]{%
    \cmidrule(lr){1-4}
    \cmidrule(lr){5-6}
    \cmidrule(lr){7-9}
    \cmidrule(lr){10-12}
}
\resizebox{\tablewidth}{!}{\begin{tabular}{rcrrrrrrrccc}
        \ftblheader{4}{c}{} &
        \ftblheader{2}{c}{Generic} 
        & \ftblheader{3}{c}{Personalized with $x_1$} &   
        \ftblheader{3}{c}{Personalized with $x_2$} \\ 
        %
        \cmidrule(lr){5-6}\cmidrule(lr){7-9}\cmidrule(lr){10-12}
        Group & $(x_{1}, x_{2})$ & $\nplus{}$ & $\nminus{}$ & 
        $\clf{}$ & $R(\clf{})$ & 
        $h_{1}$ & $R(h_{1})$ & $\Delta$ & 
        $h_{2}$ & $R(h_{2})$ & $\Delta$ \\ 
        \ftblmidrules{}
         $A$ & $(0, 0)$ & $10$ &
         $0$ &
         $+$ & $0$ &
         $+$ & $0$ & $0$ &
         $+$ & $10$ & $-10$ \\
         $A$ & $(0, 1)$ & $10$ &
         $0$ &
         $+$ & $0$ &
         $+$ & $0$ & $0$ &
         $+$ & $0$ & $0$ \\
         $A$ & $(1, 0)$ & $0$ &
         $20$ &
         $+$ & $20$ &
         $-$ & $0$ & $20$ &
         $-$ & $0$ & $20$ \\
         $A$ & $(1, 1)$ & $20$ &
         $0$ &
         $+$ & $0$ &
         $-$ & $20$ & $-20$ &
         $+$ & $0$ & $0$ \\
         
        \cmidrule(lr){2-4}
        \cmidrule(lr){5-6}
        \cmidrule(lr){7-9}
        \cmidrule(lr){10-12}
         
         $B$ & $(0, 0)$ & $5$ &
         $0$ &
         $+$ & $0$ &
         $-$ & $5$ & $-5$ &
         $+$ & $0$ & $0$ \\
         $B$ & $(0, 1)$ & $0$ &
         $20$ &
         $+$ & $20$ &
         $-$ & $0$ & $20$ &
         $+$ & $20$ & $0$ \\
         $B$ & $(1, 0)$ & $20$ &
         $0$ &
         $+$ & $0$ &
         $+$ & $0$ & $0$ &
         $+$ & $0$ & $0$ \\
         $B$ & $(1, 1)$ & $30$ &
         $0$ &
         $+$ & $0$ &
         $+$ & $0$ & $0$ &
         $+$ & $0$ & $0$ \\
        
        \ftblmidrules{}
         {\bfseries Total} & & $95$ & $40$
          &  & $40$ & 
          & $25$ & \cellcolor{fgood!60} $15$
          & & $30$ & \cellcolor{fgood!60} $10$ \\
        {\bfseries Group $A$} & & $40$ & $20$
          &  & $20$ & 
          & $20$ & \cellcolor{funk!60} $0$
          & & $10$ & \cellcolor{fgood!60} $10$ \\
        {\bfseries Group $B$} & & $55$ & $20$
          &  & $20$ & 
          & $5$ & \cellcolor{fgood!60} $15$
          & & $20$ & \cellcolor{funk!60} $0$ 
    \end{tabular}}
\end{center}
The generic model $\clf$ is the same whether it uses $x_1$ or $x_2$. However, a personalized model would violate fair use for group $A$ if it uses $x_1$, and would violate fair use for group $B$ if it uses $x_2$. In this case, ERM returns the personalized model that benefits group $A$ -- i.e., the majority group.
\end{example}
\cref{Fig::FailureModeFeatureSelection} could arise, for example, when developing a clinical prediction model using features that encode the outcome of competing diagnostics. More broadly, \cref{Fig::FailureModeFeatureSelection} highlights how fair use violations may be unavoidable when we must assign predictions with a single model -- as the task shows that models trained with $x_1$ and $x_2$ would lead to fair use violations on $A$ or $B$ respectively.

\subsection{Sufficient Conditions}
\newcommand{\VC}[1]{D}


We present sufficient conditions for ERM with group attributes to output a model that obeys fair use in training (Proposition~\ref{Prop::TrainingGuarantee}) and testing (Proposition~\ref{Prop::TestingGuarantee}).
\begin{proposition}
\label{Prop::TrainingGuarantee} 
Consider training a personalized model by ERM $\plf{} \in \argmin_{h \in \Hset}\emprisk{}{h}$, and evaluating its gains to personalization with respect to a generic model $\clf{} \in \argmin_{h \in \Hset_0}\emprisk{}{h}$ where $\Hset_0 \subseteq \Hset.$ The personalized model $\plf{}$ obeys fair use in terms of empirical risk so long as the model achieves the same risk as a model that specifically targets the group. That is: $$\emprisk{\gb}{\plf{}} = \emprisk{\gb}{\plf{\gb}} ~\textrm{for all groups}~\gb \in \G.$$
\end{proposition}
\cref{Prop::TrainingGuarantee} holds for settings where we fit personalized models from a class $\Hset$ that extends the generic model class $\Hset_0$ (see~\cref{Def:Extends}). This requirement implies that we should fit personalized models from model classes that are rich enough to target each intersectional group. When we personalize a linear classifier via ``score correction"~\citep[][]{van2022harm}, we should include a correction term for each group. Otherwise, we may violate fair use due to model misspecification when using a one-hot encoding as in \cref{Fig::FailureModeMisspecification}. Likewise, if we personalize a model with interaction terms, we should include an interaction for each group.
More broadly, the conditions in \cref{Prop::TrainingGuarantee} are met when, for example, we use the data from each group to train a model for each group. Given that these are sufficient conditions, it is still possible to achieve fair use even when they don't hold.

\begin{proposition}
\label{Prop::TestingGuarantee}
Consider a personalized model $\plf{}: \X \times \G \to \Y$ that ensures rationality and envy-freeness for group $\gb$ in terms of empirical risk. Denote the empirical gains in rationality and envy-freeness for group $\gb$ as:
\begin{align*}
\empgapmin{\gb} := \empgap{\gb}{\plf{\gb}}{\clf{}}, \qquad \hat{\gamma}_{\gb} := \min_{\gb' \in \G / \{\gb\}} \empgap{\gb}{\plf{\gb}}{\plf{\gb'}}.
\end{align*}
If $\empgapmin{\gb} > 0$, then rationality for group $\gb$ generalizes with probability at least $1 - \delta$ as long as:
\[
 n_{\gb} \geq \frac{4\VC{\Hset} \log \left(\frac{2n_{\gb}}{\VC{\Hset}} + 1\right) + \log \left(\tfrac{8}{\delta}\right)}{\empgapmin{\gb}^2}
\]
If $\hat{\gamma}_{\gb} > 0$, then envy-freeness for group $\gb$ generalizes with probability at least $1 - \delta$ as long as:
\begin{align*}
 n_{\gb} &\geq \frac{4\VC{\Hset} \log \left(\frac{2n_{\gb}}{\VC{\Hset}} + 1\right) + \log \left(\tfrac{8m}{\delta}\right)}{\hat{\gamma}_{\gb}^2}
\end{align*}
\end{proposition}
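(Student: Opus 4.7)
The strategy is to convert the empirical margins $\empgapmin{\gb}$ and $\hat{\gamma}_{\gb}$ into true-risk margins by invoking a VC-based uniform convergence inequality on the group-specific distribution, and then pay a union-bound factor of $m$ for envy-freeness. The key observation is that the $n_{\gb}$ examples belonging to group $\gb$ are i.i.d.\ draws from the conditional distribution $\mathcal{D}\mid \G{=}\gb$, so $\emprisk{\gb}{h}$ is an unbiased estimator of $\truerisk{\gb}{h}$, and every model appearing in the definitions of $\empgapmin{\gb}$ and $\hat{\gamma}_{\gb}$ lies in $\Hset$ (since $\Hset_0 \subseteq \Hset$).

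First I would invoke the classical Vapnik--Chervonenkis uniform convergence bound in the form that, with probability at least $1-\delta'$,
\[
\sup_{h \in \Hset}\, \bigl|\emprisk{\gb}{h} - \truerisk{\gb}{h}\bigr| \;\le\; \varepsilon_{\gb}(\delta') \;:=\; \sqrt{\frac{4\VC{\Hset}\ln\!\bigl(\tfrac{2n_{\gb}}{\VC{\Hset}}+1\bigr) + \ln\!\bigl(\tfrac{8}{\delta'}\bigr)}{n_{\gb}}},
\]
using Sauer's lemma to bound the growth function. Since both $\plf{\gb}$ and $\clf{}$ lie in $\Hset$, on the high-probability event this uniform bound controls the gap distortion $|\truegap{\gb}{\plf{\gb}}{\clf{}} - \empgap{\gb}{\plf{\gb}}{\clf{}}|$ through the two individual deviations. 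Solving $\varepsilon_{\gb}(\delta) \le \empgapmin{\gb}$ for $n_{\gb}$ (after squaring and rearranging) yields exactly the stated threshold, and on the good event this guarantees $\truegap{\gb}{\plf{\gb}}{\clf{}} > 0$, i.e.\ the rationality gain generalizes.

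For envy-freeness, I would repeat the same argument but simultaneously for every comparison predictor $\plf{\gb'}$ with $\gb' \in \G \setminus \{\gb\}$. Apply the uniform convergence bound $m$ times with confidence $1 - \delta/m$ and take a union bound; this replaces $\ln(8/\delta)$ by $\ln(8m/\delta)$ in the deviation $\varepsilon_{\gb}(\cdot)$. Requiring $\varepsilon_{\gb}(\delta/m) \le \hat{\gamma}_{\gb}$ and rearranging gives the second sample-complexity inequality. On the corresponding good event, the worst-case envy gap $\min_{\gb'\ne\gb}\truegap{\gb}{\plf{\gb}}{\plf{\gb'}}$ remains strictly positive, so envy-freeness for group $\gb$ transfers from the empirical risk to the true risk.

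The main obstacle is not the probabilistic argument itself (which is a routine VC concentration plus union bound) but matching the stated constants: one has to (i) pick the right two-sided VC inequality so that the $(4 \VC{\Hset}, \ln(8/\delta))$ constants line up, and (ii) be careful that the deviation incurred in the \emph{difference} of risks is controlled by a single, not double, application of $\varepsilon_{\gb}$ if that is how the constants are normalized in the statement. A secondary subtlety is that $\plf{\gb}$ and the $\plf{\gb'}$ are data-dependent hypotheses trained on the full sample (including the $n_{\gb}$ group-$\gb$ points), so one must justify the use of uniform convergence over $\Hset$ rather than a naive per-hypothesis Hoeffding bound; this is exactly what the VC-based supremum ensures, and no additional data-splitting is required.
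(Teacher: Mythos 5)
Your proposal is correct and follows essentially the same route as the paper: a group-conditional VC uniform convergence bound with the $(4\VC{\Hset}, \ln(8/\delta))$ constants, a decomposition of the true gain into the empirical gain minus two deviation terms, rearrangement for $n_{\gb}$, and a union bound over the $m$ comparison groups that turns $\ln(8/\delta)$ into $\ln(8m/\delta)$ for envy-freeness. The one constant-matching subtlety you flag (whether the risk difference costs one or two applications of $\varepsilon_{\gb}$) is present in the paper's own proof as well, which sums two $\delta/2$ deviations and absorbs them into the single stated bound.
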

\cref{Prop::TestingGuarantee} characterizes the sample complexity of generalization for personalized models that satisfy fair use conditions on training data. The bounds apply to a general class of personalized models, and can be strengthened by assuming a finite hypothesis class~\citep[e.g, in][]{ustun2019fairness}, or by accounting for distributional differences between groups~\citep[e.g.,][]{pmlr-v97-wang19l}.
The result holds in tasks where personalization leads to strictly positive gains with respect to rationality and envy-freeness on the training data, which is not guaranteed in practice and must be checked in practice.

\section{Empirical Study}
\label{Sec::Experiments}

\newcommand{\LR}[0]{\textmn{LR}}
\newcommand{\RF}[0]{\textmn{RF}}
\newcommand{\NN}[0]{\textmn{NN}}

\newcommand{\onehot}[0]{\textmn{1Hot}}
\newcommand{\intonehot}[0]{\textmn{All}}
\newcommand{\dcp}[0]{\textmn{DCP}}

\newcommand{\titlecell}[2]{\setlength{\tabcolsep}{0pt}{\footnotesize{\textsc{\begin{tabular}{#1}#2\end{tabular}}}}}
\newcommand{\bfcell}[2]{\setlength{\tabcolsep}{0pt}\textbf{\begin{tabular}{#1}#2\end{tabular}}}
\newcommand{\rangecell}[3]{\cell{c}{{#1}\\{\color{gray}\small{#2} -- {#3}}}}

\newcommand{\tablemetric}{Generic}
\newcommand{\tablepersonalized}{Personalized}
\newcommand{\metricsguide}[0]{\cell{r}{Personalized\\Gain\\Best/Worst Gain\\Rat. Gains/Viols \\EF Gains/Viols}}

\newcommand{\adddatainfo}[6]{\cell{l}{%
\textds{#1}\\%
$n={#2}, d={#3}$\\
$\G=\{#5\}$\\
$m={#4}$\\
{#6}
}}

\newcommand{\apnea}[0]{\adddatainfo{apnea}{1152}{26}{6}{\textfn{age},\textfn{sex}}{\citet[][]{ustun2016clinical}}}
\newcommand{\cshocke}[0]{\adddatainfo{cardio\_eicu}{1341}{49}{4}{\textfn{age}, \textfn{sex}}{\citet[][]{pollard2018eicu}}}
\newcommand{\cshockm}[0]{\adddatainfo{cardio\_mimic}{5289}{49}{4}{\textfn{age},\textfn{sex}}{\citet[][]{johnson2016mimic}}}
\newcommand{\heart}[0]{\adddatainfo{heart}{181}{26}{4}{\textfn{sex}, \textfn{age}}{\citet[][]{detrano1989international}}}
\newcommand{\kidney}[0]{\adddatainfo{kidney}{2066}{78}{6}{\textfn{sex}, \textfn{ethnicity}}{\citet[][]{zhang2021learning}}}
\newcommand{\mortality}[0]{\adddatainfo{mortality}{25366}{468}{6}{\textfn{age},\textfn{sex}}{\citet[][]{johnson2016mimic}}}
\newcommand{\saps}[0]{\adddatainfo{saps}{7797}{36}{4}{\textfn{hiv},\textfn{age}}{\citet[][]{allyn2016simplified}}}

In this section, we present an empirical study of fair use in clinical prediction models -- i.e. a class of models that routinely include group attributes and where fair use violations inflict harm. Our goals are to discuss the prevalence of fair use violations, the impact of standard personalization techniques, and the potential to resolve them through interventions in model development. %
We provide code to reproduce these results at \url{https://github.com/ustunb/fairuse} and 
include additional results in~\cref{Appendix::Experiments}.

\begin{table*}[t]
\centering
\begingroup
\resizebox{\linewidth}{!}{
\begin{tabular}{@{}l@{}r@{}H@{}H@{}H@{}*{9}c}
\multicolumn{2}{l}{ } & \multicolumn{3}{c}{} & \multicolumn{3}{c}{Test Error} & \multicolumn{3}{c}{Test AUC} & \multicolumn{3}{c}{Test ECE} \\
\cmidrule(l{-2pt}r{3pt}){6-8} \cmidrule(l{3pt}r{3pt}){9-11} \cmidrule(l{3pt}r{3pt}){12-14}

Dataset & Metrics & 
\onehot{} & \intonehot{} & \dcp{}  & 
\onehot{} & \intonehot{} & \dcp{}  & 
\onehot{} & \intonehot{} & \dcp{} & 
\onehot{} & \intonehot{} & \dcp{}\\

\cmidrule(l{-2pt}r{-2pt}){1-2} \cmidrule(l{-2pt}r{3pt}){6-8} \cmidrule(l{3pt}r{3pt}){9-11} \cmidrule(l{3pt}r{3pt}){12-14}

\apnea{} & \metricsguide{} & \cell{r}{0.589\\0.001\\0.009 / -0.000\\3/3\\5/3} & \cell{r}{0.588\\0.001\\0.002 / -0.004\\3/3\\5/5} & \cell{r}{0.681\\-0.092\\0.063 / -3.440\\3/3\\2/2} & \cell{r}{34.2\%\\-1.0\%\\0.0\% / -9.6\%\\6/4\\0/0} & \cell{r}{33.8\%\\-0.7\%\\1.7\% / -7.8\%\\5/3\\1/0} & \cell{r}{26.2\%\\7.0\%\\21.7\% / -7.8\%\\2/2\\5/4} & \cell{r}{0.750\\0.001\\0.002 / -0.001\\1/4\\0/6} & \cell{r}{0.750\\0.000\\0.001 / -0.010\\1/2\\0/6} & \cell{r}{0.803\\0.053\\0.119 / -0.005\\4/4\\0/0} & \cell{r}{7.5\%\\-1.5\%\\0.9\% / -8.6\%\\3/3\\3/0} & \cell{r}{5.5\%\\0.6\%\\0.8\% / -4.6\%\\3/3\\0/0} & \cell{r}{7.2\%\\-1.1\%\\1.7\% / -6.6\%\\3/3\\4/4}\\

\cmidrule(l{-2pt}r{-2pt}){1-2} \cmidrule(l{-2pt}r{3pt}){6-8} \cmidrule(l{3pt}r{3pt}){9-11} \cmidrule(l{3pt}r{3pt}){12-14}

\cshocke{} & \metricsguide{} & \cell{r}{0.591\\-0.000\\0.000 / -0.000\\2/2\\2/2} & \cell{r}{0.591\\-0.000\\0.001 / -0.001\\3/3\\1/1} & \cell{r}{1.045\\-0.454\\0.092 / -5.470\\1/1\\1/1} & \cell{r}{29.1\%\\-0.4\%\\0.0\% / -3.1\%\\4/2\\1/0} & \cell{r}{29.1\%\\-0.4\%\\0.2\% / -3.1\%\\4/2\\1/0} & \cell{r}{29.5\%\\-0.9\%\\13.0\% / -8.6\%\\2/2\\3/3} & \cell{r}{0.768\\0.000\\0.002 / -0.001\\2/3\\0/4} & \cell{r}{0.767\\-0.001\\0.001 / -0.001\\2/3\\0/4} & \cell{r}{0.762\\-0.007\\0.096 / -0.104\\1/1\\1/1} & \cell{r}{4.4\%\\0.4\%\\1.6\% / -1.5\%\\1/1\\0/0} & \cell{r}{4.6\%\\0.2\%\\0.9\% / -0.2\%\\0/0\\0/0} & \cell{r}{8.9\%\\-4.1\%\\-0.9\% / -6.2\%\\4/4\\1/1}\\

\cmidrule(l{-2pt}r{-2pt}){1-2} \cmidrule(l{-2pt}r{3pt}){6-8} \cmidrule(l{3pt}r{3pt}){9-11} \cmidrule(l{3pt}r{3pt}){12-14}

\cshockm{} & \metricsguide{} & \cell{r}{0.474\\0.001\\0.003 / 0.000\\4/4\\1/1} & \cell{r}{0.474\\0.001\\0.003 / -0.000\\3/3\\1/1} & \cell{r}{0.452\\0.023\\0.092 / 0.002\\4/4\\0/0} & \cell{r}{23.3\%\\0.3\%\\0.9\% / -0.1\%\\1/0\\1/0} & \cell{r}{23.4\%\\0.3\%\\0.9\% / -0.1\%\\1/0\\1/0} & \cell{r}{21.4\%\\2.2\%\\7.9\% / -0.0\%\\0/0\\4/4} & \cell{r}{0.854\\0.001\\0.001 / -0.000\\2/2\\0/4} & \cell{r}{0.854\\0.001\\0.001 / -0.000\\2/2\\0/4} & \cell{r}{0.870\\0.017\\0.053 / 0.006\\4/4\\0/0} & \cell{r}{2.1\%\\-0.4\%\\0.5\% / 0.4\%\\0/0\\1/1} & \cell{r}{2.3\%\\-0.5\%\\0.6\% / -0.2\%\\0/0\\0/0} & \cell{r}{2.3\%\\-0.6\%\\0.8\% / -2.3\%\\2/2\\3/3}\\

\cmidrule(l{-2pt}r{-2pt}){1-2} \cmidrule(l{-2pt}r{3pt}){6-8} \cmidrule(l{3pt}r{3pt}){9-11} \cmidrule(l{3pt}r{3pt}){12-14}

\heart{} & \metricsguide{} & \cell{r}{0.590\\0.015\\0.309 / -0.139\\2/2\\2/2} & \cell{r}{0.722\\-0.117\\0.040 / -0.579\\1/1\\3/3} & \cell{r}{3.041\\-2.436\\0.012 / -7.401\\0/0\\2/2} & \cell{r}{19.7\%\\-1.3\%\\0.0\% / -6.8\%\\4/1\\3/0} & \cell{r}{19.7\%\\-1.3\%\\0.1\% / -9.9\%\\4/1\\3/0} & \cell{r}{15.8\%\\2.6\%\\10.6\% / -8.4\%\\2/1\\2/2} & \cell{r}{0.870\\-0.007\\0.008 / -0.036\\1/3\\0/4} & \cell{r}{0.846\\-0.030\\0.017 / -0.055\\0/3\\0/4} & \cell{r}{0.817\\-0.060\\0.039 / -0.190\\1/1\\2/2} & \cell{r}{8.4\%\\2.8\%\\3.7\% / -0.5\%\\1/1\\1/1} & \cell{r}{17.8\%\\-6.6\%\\-1.2\% / -3.2\%\\3/3\\0/0} & \cell{r}{17.5\%\\-6.3\%\\10.1\% / -4.6\%\\1/1\\2/2}\\

\cmidrule(l{-2pt}r{-2pt}){1-2} \cmidrule(l{-2pt}r{3pt}){6-8} \cmidrule(l{3pt}r{3pt}){9-11} \cmidrule(l{3pt}r{3pt}){12-14}

\mortality{} & \metricsguide{} & \cell{r}{0.482\\0.001\\0.006 / -0.001\\4/4\\6/2} & \cell{r}{0.482\\0.001\\0.007 / -0.000\\4/4\\2/2} & \cell{r}{0.436\\0.047\\0.422 / 0.016\\6/6\\0/0} & \cell{r}{23.6\%\\-0.2\%\\0.8\% / -2.5\%\\4/4\\2/0} & \cell{r}{23.4\%\\0.0\%\\2.1\% / -0.4\%\\2/2\\2/1} & \cell{r}{20.2\%\\3.2\%\\20.8\% / -0.6\%\\1/1\\6/6} & \cell{r}{0.848\\0.000\\0.004 / -0.001\\3/3\\0/6} & \cell{r}{0.848\\0.001\\0.004 / -0.000\\4/4\\0/6} & \cell{r}{0.880\\0.033\\0.114 / 0.011\\6/6\\0/0} & \cell{r}{2.0\%\\0.2\%\\1.6\% / 0.0\%\\0/0\\4/0} & \cell{r}{2.1\%\\0.1\%\\2.9\% / -0.5\%\\0/0\\3/3} & \cell{r}{2.5\%\\-0.3\%\\11.2\% / -2.5\%\\3/3\\5/5}\\

\cmidrule(l{-2pt}r{-2pt}){1-2} \cmidrule(l{-2pt}r{3pt}){6-8} \cmidrule(l{3pt}r{3pt}){9-11} \cmidrule(l{3pt}r{3pt}){12-14}

\saps{} & \metricsguide{} & \cell{r}{0.417\\0.002\\0.115 / 0.000\\4/4\\0/0} & \cell{r}{0.417\\0.002\\0.113 / 0.000\\4/4\\2/2} & \cell{r}{0.497\\-0.078\\0.017 / -16.301\\2/2\\2/2} & \cell{r}{18.9\%\\0.0\%\\16.4\% / -12.2\%\\2/2\\2/1} & \cell{r}{18.9\%\\0.0\%\\0.7\% / -12.2\%\\3/2\\2/2} & \cell{r}{18.5\%\\0.4\%\\3.5\% / -23.3\%\\2/1\\2/2} & \cell{r}{0.890\\0.001\\0.013 / -0.000\\1/3\\0/4} & \cell{r}{0.890\\0.001\\0.013 / -0.000\\1/3\\0/4} & \cell{r}{0.888\\-0.001\\0.017 / -0.246\\2/2\\1/2} & \cell{r}{1.6\%\\0.0\%\\2.9\% / -2.1\%\\2/2\\2/2} & \cell{r}{1.6\%\\0.0\%\\2.5\% / -1.3\%\\2/2\\2/2} & \cell{r}{1.9\%\\-0.3\%\\9.4\% / -19.1\%\\2/2\\3/3}\\

\cmidrule(l{-2pt}r{-2pt}){1-2} \cmidrule(l{-2pt}r{3pt}){6-8} \cmidrule(l{3pt}r{3pt}){9-11} \cmidrule(l{3pt}r{3pt}){12-14}

\end{tabular}
}
\endgroup
\caption{Performance of personalized logistic regression models on all datasets. We show the gains of personalization in terms of test AUC, ECE, and error. We report: model performance at the population level, the overall gain of personalization, the range of gains over $m$ intersectional groups, and the number of rationality and envy-freeness gains/violations (evaluated using a bootstrap hypothesis test (\cref{Sec::HypothesisTest}) at a 10\% significance level). We include results for other model classes in \cref{Appendix::Experiments}.}
\label{Table::Results}
\end{table*}

\subsection{Setup}
\label{Sec::ExperimentalSetup}


We work with 6 datasets for clinical prediction tasks listed in \cref{Table::Results} and \cref{Appendix::Datasets}. We minimally process each dataset to impute the values of missing points (using mean value imputation), and repair class imbalances across intersectional groups (to eliminate ``trivial" fair use violations that occur due to class imbalance). We split each dataset into a training sample (80\%) to fit models, and a test sample (20\%) to evaluate the gains of personalization. 


We train 9 personalized models for each dataset. Each model belongs to one of 3 model classes: \emph{logistic regression} (\LR{}), \emph{random forests} (\RF{}), and \emph{neural nets} (\NN{}), and encodes group attributes using one of 3 personalization techniques:
\begin{itemize}[leftmargin=0pt, label={},itemsep=0.15em, topsep=0.0em]

\item \emph{One-Hot Encoding} (\onehot{}): We train a model with features that include dummy variables for each group attribute.

\item \emph{Intersectional Encoding} (\intonehot{}): We train with features that include dummy variables for each intersectional group.

\item \emph{Decoupling} (\dcp{}): We train a separate model for each intersectional group using only data from this group $\gb_i = \gb$. 
\end{itemize}
These three techniques reflect the increasingly complex approaches available to practitioners to account for group membership in a prediction model as measured in terms of the interactions between group attributes and other features: \onehot{} reflect no interactions; \intonehot{} reflect interactions between group attributes; and \dcp{} reflects all possible interactions between group attributes and features.

We evaluate the gains of personalization for each model in terms of three performance metrics, reflecting common metrics that are encountered in different tasks: (1) \emph{error rate}, which reflects the accuracy of yes-or-no predictions, e.g., for a diagnostic test~\citep[][]{eusebi2013diagnostic}; (2) \emph{area under ROC curve} (AUC), which measures accuracy in ranking, e.g., for triage~\citep[e.g.,][]{zhan2020blastocyst}; (3) \emph{expected calibration error} (ECE), which measures the reliability of risk predictions for a risk score~\citep[][]{blaha2016critical,ustun2019learning}.

\subsection{Results}
\label{Sec::ExperimentalResults}

We summarize our results for logistic regression in \cref{Table::Results} and for neural networks and random forests in~\cref{Appendix::Experiments}. 

\paragraph{On the Prevalence of Fair Use Violations}

Our results show that we train models that improve population level performance across prediction tasks in terms of training loss (guaranteed), training performance (expected), and test performance (expected). Yet personalized models that improve performance at a population level can also reduce performance for specific groups. These violations arise across datasets, personalization techniques, and model classes.

We consider the standard configuration used to develop clinical prediction models -- i.e., a logistic regression model with a one-hot encoding of group attributes (\LR{}+\onehot{}). In this case, we find that at least one group experiences a statistically significant fair use violation in terms of error on 4/6 datasets (5/6 for AUC and ECE). On \textds{saps}, for example, \LR{} + \onehot{} exhibits a statistically significant gain from personalization for patients over 30 who are HIV negative. Conversely, in \textds{cardio\_eicu} when training \LR{}+\intonehot{} we detect a fair use violation for old females (see e.g., 4/2 Rat. Gains/Viols. respectively for test error in \cref{Table::Results}).

\paragraph{On the Robustness of Personalization Techniques}

Our results show there is no one personalization technique that can avoid fair use violations, as demonstrated by the fact that the personalization technique that minimizes fair use violations varies across datasets, model classes, and prediction tasks.   
In \cref{Table::Results}, for example, we find that the best technique to personalize a logistic regression model for \textds{cardio\_eicu} is to use an intersectional encoding, but to train decoupled models for \textds{mortality}. These strategies change across model classes -- as the ideal strategies for neural networks are decoupling and intersectional encoding, respectively \textds{cardio\_eicu} and \textds{mortality} (see \cref{Appendix::Experiments}). %
Even configurations that exhibit few violations across datasets may fail critically across groups. For example, \LR{}+\dcp{} for \textds{saps} leads to a 10\% increase in error for \textfn{HIV+ \& >30}. 
Overall, these results suggest that the most reliable way to avoid a fair use violation is to check.

\paragraph{On Detecting Violations}

Our results underscore the need for reliable procedures to spot fair use violations or claim gains from personalization. We can often find reliable instances of benefit or harm but we sometimes are unable to do so. An actionable finding from evaluating the gains of personalization is a group does not experience a meaningful gain nor harm due to personalization. We note a number of cases across datasets, personalization techniques, and model classes where we note no meaningful gain or harm. Often times this is because the effect size is small or the group sample sizes are too small. 

In such cases where we are unable to detect any impact from personalizationo, one may wish to intervene to avoid soliciting unnecessary data. For example, when group attributes encode information that is sensitive or must be collected at prediction time (e.g., \textfn{HIV}), we may prefer to avoid soliciting information unless it is demonstrably useful for prediction.

\paragraph{On Resolving Violations}

Our results show that routine decisions in model development can induce considerable differences in group-level performance. This suggests that we can reduce fair use violations through ``interventions" in model development. We studied the effectiveness of this approach through an ablation study where we repeated our experiments with interventions that address failure modes in \cref{Sec::FailureModes}, namely: using an intersectional one-hot encoding, decoupled training, and equalizing sample sizes. 

Our results show that interventions can often reduce fair use violations. For example, we can eliminate all fair use violations for \textds{cardio\_mimic} in our standard configuration by decoupled training. However, there is no ``silver bullet" intervention that resolves fair use violations across all datasets and model classes. In general, the best intervention varies across model classes and datasets. In some cases, the best intervention may fail to resolve all fair use violations as resolving a violation for one group may induce a violation on another group. In \textds{cardio\_eicu}, for example, a logistic regression model with a one-hot encoding will exhibit a violation on old males. Switching an intersectional encoding will fix this violation but introduce another for old females.


\begin{table*}[!t]
    \renewcommand{\ftblmidrules}[0]{\cmidrule(lr){1-1}\cmidrule(lr){2-4}\cmidrule(lr){5-6}}
    \centering
    \resizebox{\linewidth}{!}{
    %
     \begin{tabular}{rHcccc}
    \multicolumn{1}{c}{Group} &
     \multicolumn{3}{c}{\textsc{Test Error}} & \multicolumn{2}{c}{\textsc{Intervention}} \\
    \cmidrule(lr){1-1}\cmidrule(lr){2-4}\cmidrule(lr){5-6}    
\multicolumn{1}{c}{$\gb{}$} & $\truerisk{}{\clf{}}$ & $ \truerisk{\gb}{\plf{\gb}}$ &  $\Delta_{\gb}$ & 
      \cell{r}{Assign $\clf{}$} & 
    \cell{r}{Assign $h^\textrm{dcp}_{\gb}$} \\
    \ftblmidrules{}
                  \textfn{female}, \textfn{black} & 59.0\% &                    55.5\% &        \good{3.5\%} & 3.5\% & \good{ 33.1\%} \\   
                  \textfn{female}, \textfn{white} & 23.9\% &                    21.9\% &        \good{2.0\%} & 2.0\% & 2.0\% \\    
                 \textfn{female}, \textfn{other} & 27.0\% &                    20.4\% &        \good{6.6\%} & 6.6\% & \good{ 9.1\%} \\        
              \textfn{male}, \textfn{black} & 26.7\% &                    29.4\% &       \violation{-2.7\%} & \closecall{0.0\%} & \good{ 15.6\%} \\            
              \textfn{male}, \textfn{white} & 30.6\% &                    21.9\% &       \good{8.1}\% & 8.1\% & \good{ 3.7\%} \\            
                \textfn{male}, \textfn{other} & 23.4\% &                    25.3\% &        \violation{-1.9\%} & \closecall{0.0\%} & \good{ 1.3\%} \\            
    \ftblmidrules{}
                  \textbf{Total} & \color{black}{28.5\%} &                    \color{black}{27.1\%} &       \good{1.4\%} & - & - 
    \end{tabular}

    \renewcommand{\ftblmidrules}[0]{%
    \cmidrule(lr){1-3}\cmidrule(lr){4-5}%
    }
\begin{tabular}{Hcccc}
    \multicolumn{3}{c}{\textsc{Test AUC}}  & \multicolumn{2}{c}{\textsc{Interventions}}\\ \ftblmidrules{}
    $\truerisk{}{\clf{}}$ &  $ \truerisk{\gb}{\plf{\gb}}$ &  $\Delta_{\gb}$ & 
    \cell{r}{Assign $\clf{}$} & 
    \cell{r}{Assign $h^\textrm{dcp}_{\gb}$}\\
    \ftblmidrules{}
    0.433 &                    0.443 &        \good{0.010} & 0.010 & \good{0.315} \\ 
     0.841 &                    0.845 &        \good{0.004} & 0.004 & \good{0.057}  \\    
     0.864 &                    0.861 &       \violation{-0.003} & \closecall{0.000} & \good{0.038}  \\        
     0.779 &                    0.799 &        \good{0.020} & 0.020 & \good{0.096} \\
     0.761 &                    0.767 &       \good{0.006} & 0.006 & \good{0.104}     \\  
    0.838 &                    0.835 &        \violation{-0.003} &  \closecall{0.000} & \good{ 0.017} \\            
    \ftblmidrules{}
    \color{black}{0.793} &                    \color{black}{0.803} &       \good{0.010} & - & - 
     \end{tabular}
   
    \begin{tabular}{Hcccc}
    \multicolumn{3}{c}{\textsc{Test ECE}}  & \multicolumn{2}{c}{\textsc{Interventions}} \\
    \ftblmidrules{}
    $\truerisk{}{\clf{}}$ &  $ \truerisk{\gb}{\plf{\gb}}$ &  $\Delta_{\gb}$ & 
    \cell{r}{Assign $\clf{}$} & 
    \cell{r}{Assign $h^\textrm{dcp}_{\gb}$}\\
    \ftblmidrules{}
    34.3\% &                    32.2\% &        \good{2.1\%} & 2.1\% & \good{11.9\%} \\   
    12.1\% &                    10.1\% &        \good{2.0\%} & 2.0\% & \good{0.03\%}\\    
    16.5\% &                    14.7\% &        \good{1.8\%} & 1.8\% & \good{5.3\%} \\        
    17.9\% &                    18.1\% &       \violation{-0.0\%} &  \closecall{0.0}\% & \good{6.4\%} \\            
    9.2\% &                    10.6\% &       \violation{-1.4}\% & \closecall{0.0}\% & 1.4\%\\            
    13.7\% &                    13.5\% &        \good{0.0\%} & 0.0 & 0.0\% \\            
    \ftblmidrules{}
    \color{black}{4.9\%} &                    \color{black}{4.7\%} &       \good{0.2\%} & - & - 
    \end{tabular}
    }
    \caption{Fair use evaluation of a personalized logistic regression model with a one-hot encoding of group attributes. As shown, personalization can improve overall performance while reducing performance for specific groups ({\color{fbad}{red}}). This result holds across all performance metrics. 
    In such cases, we can resolve fair use violations and improve the gains from personalization by assigning personalized predictions to each group with multiple models. By this, we mean selecting from one of three available models which provides the most accurate predictions for a group: a generic model $\clf{}$, the personalized model $\plf{\gb}$, or a decoupled model $h^\textrm{dcp}$. We highlight cases where assigning predictions from one of these models led to a gain in green, and where it resolved a violation in yellow.}
    \label{Table::KidneyResults}
\end{table*}

\section{Mortality in Acute Kidney Injury}
\label{Sec::Demonstration}

In this section, we audit fair use for a mortality prediction model for patients with acute kidney injury. Our results demonstrate how evaluating the gains of personalization can inform model development and improve simple interventions to mitigate harm.

\subsection{Setup}
\label{Sec::DemonstrationSetup}

We consider a mortality prediction task for critically-ill patients who receive continuous renal replacement therapy. The data contains $n = 2,066$ patients from MIMIC III and IV~\citep[][]{johnson2016mimic} and includes $d = 78$ features related to their health, lab tests, length of stay, and potential for organ failure. Here, $y_i = +1$ if patient $i$ dies in the ICU and $\textrm{Pr}(y_i=+1)=51.1\%$. We train personalized models using the setup in \cref{Sec::ExperimentalSetup}, and evaluate fair use for groups defined by the attributes $\textgp{sex} \in \{\textgp{male}, \textgp{female}\}$ and $\textgp{race} \in \{\textgp{white}, \textfn{black}, \textgp{other}\}$. 

\subsection{Results}
\label{Sec::DemonstrationResults}

We show performance for the personalized logistic regression model with a one-hot encoding in \cref{Table::KidneyResults}, and include results for other configurations in \cref{Appendix::Experiments}. Our findings show that personalization yields uneven gains at a group level, producing fair use violations across prediction tasks and model classes. In this case, the gains in error across group range from -5.2\% to 6.8\%, and two groups experience statistically significant fair use violations: (\textgp{male, black}) and (\textgp{male, other}).

\paragraph{On the Use of Race} 

Clinical prediction models include group attributes whenever there is a plausible biological relationship between group membership and the outcome of interest or social determinants of health. These norms have led to the development of models that use race and ethnicity~\cite{eneanya2019reconsidering,vyas2020hidden, flamm1997vaginal, haukoos2012derivation, levey2009new, moore2014derivation, gail1989projecting, kanis2007glucocorticoid}. Recently, \citet{vyas2020hidden} discuss how such models can inflict harm and urge physicians to check if ``race correction is based on robust [statistical] evidence." Our results highlight how a fair use audit can yield evidence that serves to guide the use ``race correction" in such cases. Here, checking rationality shows that a race-specific model can reduce performance for specific groups -- e.g., (\textgp{male, black}) and (\textgp{male, other}). Checking envy-freeness reveals that groups expect better performance by misreporting group membership -- e.g., (\textgp{male,other}) would experience a 5.6\% gain in test error by reporting any other race. 


In tasks where race improves performance, race may act as a proxy for broader social determinants of health. Thus, a model that includes race may act as a ``smoke screen" in that it attributes differences in health outcomes to an immutable factor, and perpetuates inaction on the root causes of health disparities~\citep{perez2017now}. 
Given these uncertainties, we advocate that race should only be included in a clinical prediction model when there is evidence of gain. Regardless of its use in prediction, collecting information about race and ethnicity is necessary to measure model performance across these groups. In such cases, one should be careful to disclose the purposes of data collection -- stating that it will be used to evaluate performance but not to assign personalized predictions. 
In tasks where race does not improve model performance, models may exhibit differences in performance across racial groups -- as data may encode proxies of race in redacted notes~\citep{adam2022write}, or even band-pass filtered images~\citep{gichoya2022ai}. 

\paragraph{Interventions}

We build on our results to discuss interventions that can resolve fair use violations and broaden the gains to personalization by using multiple models. These are simple interventions that have the benefit of being broadly applicable -- i.e., we can use them to mitigate harm from fair use violations for any prediction task and model class.

\emph{Assigning a Generic Model}.\quad We assign groups who experience a fair use violation the predictions from a generic model $\clf{}$. This intervention will resolve all fair use violations in a way that strictly improves performance. In this case, it resolves all rationality violations (2/3/2 in terms of error/AUC/ECE respectively). We also observe a potential to reduce data usage in deployment: seeing how both (\textgp{male, black}) and (\textfn{male, other}) experience a fair use violation in terms of error, we could soliciting race for all \textfn{male} patients and reduce test error by 1\% (as the loss in accuracy for (\textfn{white, male}) are offset by the gain in accuracy for (\textfn{male,black}) and (\textfn{male, other}).

\emph{Assigning a Decoupled Model}.\quad  We assign groups who experience a fair use violation predictions from the best of a generic model, personalized model, or a \emph{decoupled model} $h^\textrm{dcp}_{\gb{}}$-- i.e., a model trained using only data from their group. 
While this approach may not resolve fair use violations, it can produce surprisingly large gains as decoupling effectively personalizes the entire model development pipeline. Our results in \cref{Table::KidneyResults} show the potential gains of this intervention across all performance metrics. 
Focusing on error, we see that one can: (1) eliminate fair use violations for (\textgp{male,black}) and (\textgp{male,other}); (2) greatly improve accuracy for (\textfn{female,black}) who experience a gain of {\bf 37.3\%} from a previous accuracy of less than 50\%; and (3) improve overall gains by $6.2\%$.
We observe similar effects across other configurations and model classes. 
\section{Concluding Remarks}
\label{Sec::ConcludingRemarks}

Machine learning models that are personalized with group attributes can fail to improve performance for all groups who provide personal data. Our results underscore the need to evaluate fair use when developing models with group attributes that are protected, sensitive, self-reported, or costly to acquire ~\citep[e.g.,][]{steyerberg2010assessing,wallace2011framework,collins2015transparent,cowley2019methodological,bouwmeester2012reporting,cabitza2021need,kent2020predictive}. Evaluating fair use is a routine procedure that whose results can be summarized and communicated in a model report~\citep[][]{mitchell2019model,arnold2019factsheets,barocas2021designing,bynum2021disaggregated,cabrera2019fairvis} -- and that can be used to flag instances where personalization reduces performance for specific groups and guide interventions that broaden the gains of personalization.

\paragraph{Limitations} 

Our work assumes that a gain in performance is a suitable ``stand-in" for preference or harm, which holds in tasks where every group benefits from a more accurate model. This assumption may not hold when, for example, models are trained to use proxy labels, or groups may prefer a specific prediction over the most accurate prediction. 

In closing, we caution that fair use should be considered a safeguard against ``worsenalization" rather than a rubber stamp for consent. In effect, fair use is not an individual-level guarantee. The gains associated with fair use conditions reflect average measures of performance over individuals in a group. In tasks where these gains are reported to individuals, they should be presented alongside information that summarizes the impact of personalization on their prediction -- e.g., the degree of change in individual predictions due to personalization, and the degree of representation in the sample used to evaluate the gains of personalization.

\section*{Acknowledgements} 
We thank the following individuals for helpful discussions: Flavio Calmon, Katherine Heller, Sanmi Koyejo, Ziad Obermeyer, Charlie Marx, Stephen Pfohl, Emma Pierson, Kush Varshney, and Haoran Zhang. This work was supported by funding from the National Science Foundation IIS 2040880, the NIH Bridge2AI Center Grant U54HG012510, and the Wellcome Trust. 

{%
\small
\clearpage
\bibliography{fair_use_hc}
}

\clearpage
\appendix
\onecolumn

\section{Notation}
\label{Table::TableOfNotation}

We provide a list of the notation used throughout the paper in~\cref{Table::Notation}.

\begin{table}[h!]
\centering
\resizebox{0.8\textwidth}{!}{
\begin{tabular}{l>{\quad}l}
\textheader{Symbol} & \textheader{Meaning}\\ 
\toprule
$\xb_i = (x_{i,1}, x_{i,2},\ldots, x_{i,d})$ & feature vector of example $i$ \\ 
$y_i \in \Y$ & label of example $i$ \\ 
$\gb_i \in \{g_{i,1}, g_{i,2}, \ldots, g_{i,k}\}$ & group membership of example $i$\\
$\G = \G_1 \times \G_2 \times \ldots \times \G_k$ & space of group attributes\\ 
$m = |\G|$ & number of intersectional groups \\[1em]
$\n{\gb} := \sum 1[\gb_i = \gb]$ & number of examples of group $\gb\in\G$ \\
$\nplus{\gb} := \sum 1[\gb_i = \gb{},\, y_i = +1]$ &  number of examples of group $\gb\in\G$ with $y_i = +1$ \\
$\nminus{\gb} :=  \sum 1[\gb_i = \gb{},\, y_i = -1]$ & number of examples of group $\gb\in\G$ with $y_i = -1$ \\[1em]

$\plf{}: \X \times \G \to \Y$ & personalized model\\ 
$\Hset$ & hypothesis class of personalized models\\ 
$\plf{\gb}: \X \times \G \to \Y$ & personalized classifier where group membership is reported truthfully as $\gb{}$\\ 
$\clf{}: \X \to \Y$ & generic model \\ 
$\Hset_0$ & hypothesis class of generic models \\ 
$\truerisk{\gb}{\plf{\gb'}}$& true risk of model $\clf{}$ of group $\gb$ if they report $\gb'$\\
$\emprisk{\gb}{\plf{\gb'}}$& empirical risk of model $\plf{}$ of group $\gb$ if they report $\gb'$\\[1em]
$\truegain{\gb}{h}{h'}$ & gain (i.e., reduction in true risk) for group $\gb$ when using $h$ instead of $h'$\\
$\truegain{\gb}{\plf{\gb}}{\clf{}}$ & rationality gap for group $\gb$ under model $\plf{}$ \\
$\truegain{\gb}{\plf{\gb}}{\plf{\gb'}}$ & envyfreeness gap for group $\gb$ under model $\plf{}$\\
\bottomrule
\end{tabular}
}
\caption{Notation}
\label{Table::Notation}
\end{table}

\clearpage
\section{Supporting Material for \cref{Sec::FailureModes}}
\label{Appendix::Theory}

\subsection{Additional Failure Modes of Personalization}
\label{Appendix::FailureModes}

We describe additional mechanisms that lead personalized models to exhibit fair use violations. The mechanisms below reflect failure modes that arise in later stages of the machine learning pipeline, and that are more difficult to address through interventions.  

\paragraph{ERM with a Surrogate Loss Function}

Consider a setting where we want a personalized model that maximizes classification accuracy -- i.e., one that minimizes the 0--1 loss. If we fit this classifier using a linear SVM – e.g., by solving an ERM problem that optimizes the hinge loss – the approximation error between the 0-1 loss and the hinge loss can produce a fair use violation (see Figure~\ref{Fig::FailureModeSurrogateLoss}). This example is specifically designed to avoid fair use violations that stem from model misspecification.

\newcommand{\addfmplot}[1]{\includegraphics[trim=0.0in 0.0in 0.0in 0in, clip, width=0.4\textwidth]{figure/failure_modes/fm5/#1}}

\begin{figure}[!h]
    \centering
    \resizebox{\linewidth}{!}{
    \begin{tabular}{cc}
    \cell{c}{\textsf{Group $A$}} & \cell{c}{\textsf{Group $B$}} \\ 
    \addfmplot{personalized_outlier_classifier_group_A_V3.pdf} &
    \addfmplot{personalized_outlier_classifier_group_B_V6.pdf}
    \end{tabular}
    }
    \caption{Fair use violations resulting from empirical risk minimization with a surrogate loss function. We consider a classification task with two features $\xb = (x_1, x_2)$ and one group attribute $\gb \in \{A, B\}$ in which we fit a linear SVM $\plf{\gb}$ but evaluate the the gains of personalization in terms of the error rate (i.e., hinge loss vs. 0-1 loss). We plot the data for group $A$ and group $B$ separately, and show the generic classifier ($\clf{}$; grey) and the personalized classifiers for the corresponding group ($h_A$ or $h_B$; black). In this case, the personalized model produces a fair use violation for Group $B$ due to an outlier $\xb_O$. As a baseline for comparison, we show the personalized models that we would obtain by optimizing an exact loss function (i.e., 0-1 loss, which matches the performance metric that we use to evaluate the gains for personalization). As shown, we would expect to avoid this violation had we fit a model by optimizing the 0--1 loss directly.}
    \label{Fig::FailureModeSurrogateLoss}
\end{figure}

\FloatBarrier
\paragraph{Generalization \& Dataset Shifts} 

Fair use violations can arise in deployment. Small samples may distort the relative prevalence of each group, leading ERM to return a personalized model or suboptimal generic model. In \cref{Fig::FailureModeSamplingError}, we show how fair use violations occur when sampling bias results in a difference in the training data distribution and the true distribution. Here, we sample data from the true distribution where the small sample size or sampling bias results in a label shift for one specific group. Likewise, violations can  arise as a result of changes in the data distribution~\citep[i.e., dataset shift ][]{quinonero2008dataset,finlayson2021clinician,guo2021systematic} (see \cref{Fig::FailureModeLabelShift})

\begin{figure}[!htb]
    \centering
\renewcommand{\ftblmidrules}[0]{%
        \cmidrule(lr){1-2}
        \cmidrule(lr){3-4}
        \cmidrule(lr){5-6}
        \cmidrule(lr){7-8}
        \cmidrule(lr){9-11}
        \cmidrule(lr){12-14}
    }
    \resizebox{\linewidth}{!}{\begin{tabular}{rrrrrrrrrrrrrr}
        \ftblheader{2}{c}{Group} &
        \ftblheader{2}{c}{Training Data} &
        \ftblheader{2}{c}{Data Distribution} &
        \ftblheader{2}{c}{Predictions}  &
        \ftblheader{3}{c}{Observed Performance} &  
        \ftblheader{3}{c}{True Performance} \\
        \ftblmidrules{}
        
        $g_1$ & $g_2$ & 
         $\nplus{}$ & $\nminus{}$  &
         $\nplus{}$ & $\nminus{}$ &
         $\clf{}$ & $\plf{\gb}$ &
         $\truerisk{\gb}{\clf{}}$ & $\truerisk{\gb}{\plf{}}$ &
         $\truegain{\gb}{\plf{\gb}}{\clf{}}$ & 
         $\truerisk{\gb}{\clf{}}$ & $\truerisk{\gb}{\plf{}}$ &
         $\truegain{\gb}{\plf{\gb}}{\clf{}}$ \\
        \ftblmidrules{}
         $0$ & $0$& 
         $65$ & $60$ &
         $130$ & $120$ &
         $+$ & $+$ &
         $60$ & $60$ & 
         \cellcolor{funk!60} $0$ &
         $120$ & $120$ & 
         \cellcolor{funk!60} $0$ \\
         $1$ & $0$& 
         $60$ & $65$ &
         $120$ & $130$ &
         $+$ & $-$ &
         $65$ & $60$ & 
         \cellcolor{fgood!60} $5$ &
         $130$ & $120$ & 
         \cellcolor{fgood!60} $10$ \\
         $0$ & $1$ &
         $60$ & $65$ &
         $130$ & $120$ &
         $+$ & $-$ &
         $65$ & $60$ & 
         \cellcolor{fgood!60} $5$ &
         $120$ & $130$ & 
         \cellcolor{fbad!60} $-10$ \\
         $1$ & $1$ &
         $70$ & $55$ &
         $140$ & $110$ &
         $+$ & $+$ &
         $55$ & $55$ & 
         \cellcolor{funk!60} $0$ &
         $110$ & $110$ & 
         \cellcolor{funk!60} $0$ \\
        \ftblmidrules{}
         & 
         \textbf{Total} &
         $255$ & $245$ &
         $520$ &  $480$ & 
          &   & 
         $245$ & $235$ &
         \cellcolor{fgood!60} $10$ & 
         $480$ & $470$ & 
         \cellcolor{funk!60} $0$ \\
    \end{tabular}
}
\caption{Fair use violations can arise when personalizing models on small samples. Here, we show a 2D classification task in which a personalized model only exhibits fair use violations in deployment. Here, group  $(1,0)$ experiences an gain once the model is deployment. In contrast, group $(0,1)$ experiences a fair use violation as a result of sampling error.}
    \label{Fig::FailureModeSamplingError}
\end{figure}

\begin{figure}[!htb]
    \centering

    \renewcommand{\ftblmidrules}[0]{%
        \cmidrule(lr){1-2}
        \cmidrule(lr){3-4}
        \cmidrule(lr){5-6}
        \cmidrule(lr){7-8}
        \cmidrule(lr){9-11}
        \cmidrule(lr){12-14}
    }
    \resizebox{\linewidth}{!}{\begin{tabular}{rrrrrrrrrrrrrr}
        \ftblheader{2}{c}{Group} &
        \ftblheader{2}{c}{Training Data} &
        \ftblheader{2}{c}{Data Distribution} &
        \ftblheader{2}{c}{Predictions}  &
        \ftblheader{3}{c}{Observed Performance} &  
        \ftblheader{3}{c}{True Performance} \\
        \ftblmidrules{}
        
        $g_1$ & $g_2$ & 
         $\nplus{}$ & $\nminus{}$  &
         $\nplus{}$ & $\nminus{}$ &
         $\clf{}$ & $\plf{}$ &
         $\truerisk{\gb}{\clf{}}$ & $\truerisk{\gb}{\plf{}}$ &
         $\truegain{\gb}{\plf{\gb}}{\clf{}}$ & 
         $\truerisk{\gb}{\clf{}}$ & $\truerisk{\gb}{\plf{}}$ &
         $\truegain{\gb}{\plf{\gb}}{\clf{}}$ \\
        \ftblmidrules{}
         $0$ & $0$
         & $20$ & $0$ &
         $20$ & $0$ &
         $+$ & $+$ &
         $0$ & $0$ & 
         \cellcolor{funk!60} $0$ &
         $0$ & $0$ & 
         \cellcolor{funk!60} $0$ \\
         $1$ & $0$
         & $5$ & $25$ &
         $5$ & $25$ &
         $+$ & $-$ &
         $25$ & $5$ & 
         \cellcolor{fgood!60} $20$ &
         $25$ & $5$ & 
         \cellcolor{fgood!60} $20$ \\
         $0$ & $1$
         & $5$ & $25$ &
         \cellcolor{funk!60} $30$ & $25$ &
         $+$ & $-$ &
         $25$ & $5$ & 
         \cellcolor{fgood!60} $20$ &
         $20$ & $30$ & 
         \cellcolor{fbad!60} $-10$ \\
         $1$ & $1$
         & $20$ & $0$ &
         $20$ & $0$ &
         $+$ & $+$ &
         $0$ & $0$ & 
         \cellcolor{funk!60} $0$ &
         $0$ & $0$ & 
         \cellcolor{funk!60} $0$ \\
        \ftblmidrules{}
         & 
         \textbf{Total}  &
         $50$ &  $50$ & 
         $75$ & $45$ &
         &   & 
         $50$ & $10$ &
         \cellcolor{fgood!60} $40$ & 
         $45$ & $35$ & 
         \cellcolor{fgood!60} $10$
    \end{tabular}
    }
    \caption{Label shift produces a fair use violation. Here, we train a linear classifier on a dataset with [one binary feature and one binary group attribute]. As shown, personalization leads to overall improvement reducing aggregate reduce from 50 to 24 and group-specific improvement on the training data. However, not all groups perform equally well in deployment. While groups $(0,1)$ and $(1,1)$ see improvements, a violation (red) occurs for group $(1,0)$ due to the label shift where positive examples in the true distribution for group $(0,1)$ (highlighted in yellow) are undersampled in the training data.}
    \label{Fig::FailureModeLabelShift}
\end{figure}

\subsection{Missing Proofs}
\label{Appendix:TheoreticalGuarantees}

We provide the proofs for our sufficient conditions described in~\cref{Sec::FailureModes}. We start with a simple condition to ensure the empirical risk minimizer over $\Hset$ can return a model that assigns the same predictions as a generic model for every group.
\begin{definition}
\label{Def:Extends}
A personalized model class $\Hset$ \emph{extends} a generic model class $\Hset_0$ if for every personalized model $h \in \Hset$, there exists a generic model $\clf{} \in \Hset_0$ such that $\clf{}(\xb) = \plf{}(\xb, \gb)$ for all $\xb \in \X$ and all groups $\gb \in \G$.
\end{definition}
This is a basic condition that is often satisfied in practice, and can be guaranteed by practitioners during model specification. Intuitively the condition is meant to rule out instances where a personalized model exhibits a rationality violation because it is required to account for group membership (see e.g., \cref{Fig::FailureModeFeatureSelection}).

\paragraph{\cref{Prop::TrainingGuarantee}}

Consider training a personalized model by ERM $\plf{} \in \argmin_{h \in \Hset}\emprisk{}{h}$, and evaluating its gains to personalization with respect to a generic model $\clf{} \in \argmin_{h \in \Hset_0}\emprisk{}{h}$ where $\Hset_0 \subseteq \Hset.$ The personalized model $\plf{}$ obeys fair use in terms of empirical risk so long as: $$\emprisk{\gb}{\plf{}} = \emprisk{\gb}{\plf{\gb}} ~\textrm{for all groups}~\gb \in \G.$$

\begin{proof}
Say that we have a personalized model $\plf{} \in \argmin_{h \in \Hset}\emprisk{}{h}$ that obeys $\emprisk{\gb}{\plf{}} = \emprisk{\gb}{\plf{\gb}}$ for all groups $\gb \in \G$. This implies that $\emprisk{\gb}{\plf{\gb}} \leq \emprisk{\gb}{\plf{}}$ for any model $\plf{} \in \Hset$ and any group $\gb \in \G$. 
Since $\clf{} \in \Hset$, we have that $\emprisk{\gb}{\plf{\gb}} \leq \emprisk{\gb}{\clf}$ for all groups $\gb \in \G$. Thus, rationality holds for all groups $\gb \in \G$.
Likewise, since $\plf{\gb^{'}} \in \Hset$, we have that $\emprisk{\gb}{\plf{\gb}} \leq \emprisk{\gb}{\plf{\gb^{'}}}$ for all groups $\gb, \gb' \in \G$. Thus, envy-freeness holds for all groups $\gb \in \G$.

\end{proof}

\paragraph{\cref{Prop::TestingGuarantee}} Consider a personalized model $\plf{}: \X \times \G \to \Y$ that ensures rationality and envy-freeness for group $\gb$ in terms of empirical risk. Denote the empirical gains in rationality and envy-freeness for group $\gb$ as:
\begin{align*}
\empgapmin{\gb} := \empgap{\gb}{\plf{\gb}}{\clf{}}, \qquad \hat{\gamma}_{\gb} := \min_{\gb' \in \G / \{\gb\}} \empgap{\gb}{\plf{\gb}}{\plf{\gb'}}
\end{align*}
If $\empgapmin{\gb} > 0$, then rationality for group $\gb$ generalizes with probability at least $1 - \delta$ as long as:
\begin{align}
\n{\gb} \geq \frac{4\VC{\Hset} \log \left(\frac{2n_{\gb}}{\VC{\Hset}} + 1\right) + \log \left(\tfrac{8}{\delta}\right)}{\empgapmin{\gb}^2} \label{Eq::RatBound} 
\end{align}
If $\hat{\gamma}_{\gb} > 0$, then envy-freeness for group $\gb$ generalizes with probability at least $1 - \delta$ as long as:
\begin{align}
 \n{\gb} &\geq \frac{4\VC{\Hset} \log \left(\frac{2n_{\gb}}{\VC{\Hset}} + 1\right) + \log \left(\tfrac{8m}{\delta}\right)}{\hat{\gamma}_{\gb}^2}
 \label{Eq::EFBound} 
\end{align}

The proof of \cref{Prop::TestingGuarantee} are based on a generalized version of a lemma from \citet{ustun2019fairness} which assumes that the dimension of the hypothesis class is finite whereas we use VC-dimension instead of the dimension of the hypothesis class.

\begin{lemma}[Generalization of Gains]
\label{Lem::GeneralizationOfGain}

Consider a pair of classifiers $\plf{a}$ and $\plf{b}$ from a hypothesis class $\Hset$ with VC-dimension $\VC{\Hset}$. If the empirical risk of each classifier on group $\gb$ satisfy $\empgain{\gb}{\plf{a}}{\plf{b}} := \emprisk{\gb}{\plf{b}} - \emprisk{\gb}{\plf{a}} > 0$, then for any $\delta > 0$, the corresponding gap in true risk will satisfy $\truegain{\gb}{\plf{a}}{\plf{b}} > 0$ with probability at least $1 - \delta$ so long as:
\begin{align}
    \sqrt{\frac{4\VC{\Hset} \left(\log{\frac{2\n{\gb}}{\VC{\Hset}}} + 1 \right) + \log{(\frac{8}{\delta})}}{\n{\gb}}} \leq \empgain{\gb}{\plf{a}}{\plf{b}}\label{Eq::GenGainBound}.
\end{align}
\begin{proof}

The proof applies a standard concentration inequality~\cite{mitchell2007machine} to bound the generalization error of a classifier over groups as follows.
Given a classifier $\plf{} \in \Hset{}$ from hypothesis class $\Hset$ with VC-dimension $\VC{\Hset}$, and any $\delta > 0$, the generalization error of $\plf{}$ on group $\gb \in \G$ with $\n{\gb}$ will obey the following inequality~\cite{mitchell2007machine} with probability at least $1 - \tfrac{\delta}{2}$:
\begin{align}
\left| \emprisk{\gb}{\plf{}} - \truerisk{\gb}{\plf{}} \right| \leq \sqrt{\frac{\VC{\Hset} \left(\log{\frac{2\n{\gb}}{\VC{\Hset}} + 1} \right) + \log{\frac{8}{\delta}}}{\n{\gb}}}. \label{Eq::GenBoundGroup}
 \end{align}
We denote the quantity on the right hand side of Eq. \eqref{Eq::GenBoundGroup} as the bounding function $B(\n{\gb},\Hset,\delta) := \sqrt{\frac{\VC{\Hset} \left(\log{\frac{2\n{\gb}}{\VC{\Hset}} + 1} \right) + \log{\frac{8}{\delta}}}{\n{\gb}}}$. Given the bounding function $B(\n{\gb},\Hset,\delta)$, \cref{Lem::GeneralizationOfGain} states that for any $\delta > 0$, with probability at least $1 - \delta$,
\begin{align*}
    2B(\n{\gb},\Hset,\delta) \leq \empgap{\gb}{\plf{a}}{\plf{b}} \quad \implies \quad \truerisk{\gb}{\plf{b}} - \truerisk{\gb}{\plf{a}} \geq 0
\end{align*}
We will prove the statement by showing that the condition on the left hand side implies the condition on the right hand side. Assume that the condition on the left hand side holds so that $2B(\n{\gb},\Hset,\delta) \leq \empgap{\gb}{\plf{a}}{\plf{b}}$. Then we can observe that the right hand side is bounded as follows:
\begin{align*}
\truerisk{\gb}{\plf{b}} - \truerisk{\gb}{\plf{a}}
&= \truerisk{\gb}{\plf{b}} - \truerisk{\gb}{\plf{a}} + \emprisk{\gb}{\plf{a}} - \emprisk{\gb}{\plf{a}} + \emprisk{\gb}{\plf{b}} - \emprisk{\gb}{\plf{b}} \\
&= 
\underbrace{\truerisk{\gb}{\plf{b}} - \emprisk{\gb}{\plf{b}}}_{\geq -B(\n{\gb}, \Hset, \delta)} 
+ \underbrace{\emprisk{\gb}{\plf{a}} - \truerisk{\gb}{\plf{a}}}_{\geq -B(\n{\gb}, \Hset, \delta)} 
+ \underbrace{\emprisk{\gb}{\plf{b}} - \emprisk{\gb}{\plf{a}}}_{:=\empgap{\gb}{\plf{a}}{\plf{b}}} \\
& \geq  -2B(\n{\gb}, \Hset, \delta) + \empgap{\gb}{\plf{a}}{\plf{b}} \\
& \geq 0
\end{align*}
Thus we have that $\truerisk{\gb}{\plf{b}} - \truerisk{\gb}{\plf{a}} \geq 0$ whenever $2B(\n{\gb},\Hset,\delta) \leq \empgap{\gb}{\plf{a}}{\plf{b}}$. This completes the proof.
\end{proof}
\end{lemma}

We now present the proof to  \cref{Prop::TestingGuarantee}.
\begin{proof}
We recover the bounds by applying \cref{Lem::GeneralizationOfGain}.
We start with the bound on rationality in \cref{Eq::RatBound}. Given that $\empgapmin{\gb} > 0$, we apply \cref{Lem::GeneralizationOfGain} to the personalized and model $\plf{\gb}$ and the generic model $\clf$ to obtain:
\begin{align*}
    \sqrt{\frac{4\VC{\Hset} \left(\log{\frac{2\n{\gb}}{\VC{\Hset}}} + 1 \right) + \log{(\frac{8}{\delta})}}{\n{\gb}}} \leq \
    \empgapmin{\gb} \\
     \n{\gb} \geq \frac{4\VC{\Hset} \left(\log{\frac{2\n{\gb}}{\VC{\Hset}}} + 1 \right) + \log{(\frac{8}{\delta})}}{{\empgapmin{\gb}}^2}
\end{align*}
We now consider the bound on envy-freeness \cref{Eq::EFBound}. Given that $\hat{\gamma}_{\gb} > 0$, we apply \cref{Lem::GeneralizationOfGain} to the personalized model $\plf{\gb}$ and $\plf{\gb'}$ for all $\gb, \gb' \in \G$. This produces $m - 1$ preferences to generalize. Given that $m - 1 \leq m$, we apply~\cref{Lem::GeneralizationOfGain} with probability $1 - \frac{\delta}{m}$. Doing so and inverting for $n_{\gb}$ proves the result.
\begin{align*}
    \sqrt{\frac{4\VC{\Hset} \left(\log{\frac{2\n{\gb}}{\VC{\Hset}}} + 1 \right) + \log{\frac{8m}{\delta}}}{\n{\gb}}} \leq \
    \hat{\gamma}_{\gb} \\
     \n{\gb} \geq \frac{4\VC{\Hset} \left(\log{\frac{2\n{\gb}}{\VC{\Hset}}} + 1 \right) + \log{(\frac{8m}{\delta})}}{{\hat{\gamma}_{\gb}}^2}
\end{align*}
\end{proof}

\section{Additional Information on Datasets}
\label{Appendix::Datasets}

In this Appendix, we include additional information on the datasets used in \cref{Sec::Experiments} and \cref{Sec::Demonstration}. We present a summary of the goals and characteristics for each dataset in \cref{Table::DatasetOverview}. We include a brief description of each dataset and preprocessing steps taken below.

\begin{table}[h]
    \centering
    \resizebox{\linewidth}{!}{
    \begin{tabular}{lrrlll}
    \textheader{Dataset} & $n$ & $d$ & \textheader{Group Attributes} -- $\G$ & \textheader{Prediction Task} & \textheader{Reference} \\
    \toprule

    \textds{apnea} & 
    $1,152$ & $26$ & 
    \cell{l}{$\textfn{Age} \times \textfn{Sex} = \{$<30$, 30~\textfn{to}~60, 60+\} \times \{\textfn{Male}, \textfn{Female}\}$} &
    patient has obstructive sleep apnea & 
    \citet{ustun2016clinical} \\ 
    \midrule

    \textds{cardio\_eicu} & 
    $1,341$ & $49$ &
    \cell{l}{$\textfn{Age} \times \textfn{Sex} =\{\textfn{Young}, \textfn{Old}\} \times  \{\textfn{Male}, \textfn{Female}\}$} &
    patient with cardiogenic shock dies & \citet{pollard2018eicu} \\ 
    \midrule
    
    \textds{cardio\_mimic} & 
    $5,289$ & $49$ & 
    \cell{l}{$\textfn{Age} \times \textfn{Sex} = \{\textfn{Young}, \textfn{Old}\}\times \{\textfn{Male}, \textfn{Female}\}$} &
    patient with cardiogenic shock dies & \citet{johnson2016mimic} \\ 
    \midrule
    
    \textds{heart} & 
    $181$ & $26$ & 
    \cell{l}{$\textfn{Age} \times \textfn{Sex} = \{\textfn{Young}, \textfn{Old}\} \times  \{\textfn{Male}, \textfn{Female}\}$} &
    patient has heart disease & 
    \citet{detrano1989international} \\ 
    \midrule
    
    \textds{kidney} & 
    $2,066$ & $78$ &
    \cell{l}{$\textfn{Sex} \times \textfn{Race} = \{\textfn{Male}, \textfn{Female}\} \times  \{\textfn{White, Black, Other}\}$} & 
    mortality of patient on CRRT &
    \citet{zhang2021learning} \\
    \midrule
    
    \textds{mortality} & 
    $21,139$ & $484$ &
     \cell{l}{$\textfn{Age} \times \textfn{Sex} = \{<30, 30~\textfn{to}~60, 60+\} \times  \{\textfn{Male}, \textfn{Female}\}$} & 
    mortality of patient in ICU &
    \citet{harutyunyan2019multitask} \\
    \midrule
    
    \textds{saps} & 
    $7,797$ & $36$ & 
     \cell{l}{$\textfn{Age} \times \textfn{HIV} = \{\leq 30, 30+\} \times  \{\textfn{Positive}, \textfn{Negative}\}$} & 
    mortality of patient in ICU & \citet{le1993new} \\
    \bottomrule
    \end{tabular}
    }
    \caption{Clinical prediction tasks considered in \cref{Sec::Experiments} and \cref{Sec::Demonstration}. We state conditions for $y_i = +1$ for each dataset. All datasets used are publicly available. Datasets based on MIMIC-III \cite{johnson2016mimic} (\textds{kidney}, \textds{mortality}) and eICU \cite{pollard2018eicu} (\textds{cardio}) are hosted on PhysioNet under the PhysioNet Credentialed Health Data License. The \textds{heart} dataset is hosted on the UCI ML Repository under an Open Data license. The \textds{apnea} and \textds{saps} datasets must be requested from the authors of the papers listed under references~\citep{le1993new,ustun2016clinical}. In cases where data access requires consent or approval from the data holders, we have followed the proper procedure to obtain such consent.}
    \label{Table::DatasetOverview}
\end{table}

\paragraph{\textds{apnea}} We use the obstructive sleep apnea dataset from~\citet{ustun2016clinical}~\citep[see also][]{ustun2016supersparse}. The dataset contains a cohort of 1,152 patients of which $\prob{y=+1}=23\%$ have OSA and includdes 26 features that cover information that is readily available in an electronic health record (e.g. BMI, comobordities, age, sex).

\paragraph{\textds{cardio\_eicu} \& \textds{cardio\_mimic}} Cardiogenic shock is a serious acute condition where the heart cannot provide sufficient blood to the vital organs. We create a cohort of patients who have cardiogenic shock during an ICU stay from the eICU Collaborative Research Database V2.0\citep{pollard2018eicu} and MIMIC-III databases~\cite{johnson2016mimic}, respectively. The goal is to predict mortality for a patient with cardiogenic shock. As features include summarize statistics for vitals and lab tests (e.g. systolic BP, heart rate, hemoglobin count) obtained up to 24 hours prior to the onset of cardiogenic shock. The final dataset contains 8,815 patients and $\prob{y_i =+1} = 13.5\%$.

\paragraph{\textds{heart}} We use the Heart dataset from the UCI Machine Learning Repository, where the goal is to predict the presence of heart disease which covers a cohort of 303 patients, of which $\prob{y_i = +1} = 54.5\%$ have heart disease. We use all available features, treating \textfn{cp}, \textfn{thal}, \textfn{ca}, \textfn{slope} and \textfn{restecg} as categorical, and all remaining features as continuous. 

\paragraph{\textds{kidney}} We use MIMIC-III and MIMIC-IV \cite{johnson2016mimic} to define a cohort of patients who were given \emph{continuous renal replacement therapy} (CRRT) at any point during their ICU stay. For patients with multiple ICU stays, we select their first one. We define the target as whether the patient dies during the course of their selected hospital admission. As features, we select the most recent instances of relevant lab measurements (e.g. sodium, potassium, creatinine) prior to the CRRT start time, along with the patient's age, the number of hours they have been in ICU when CRRT was administered, and their Sequential Organ Failure Assessment (SOFA) score at admission. We treat all variables as continuous with the exception of the SOFA score, which we treat as ordinal. This results in a dataset of 1,722 CRRT patients, with $\prob{y_i = +1} = 51.1\%.$ 

\paragraph{\textds{mortality}} We define a cohort of patients for in-hospital mortality prediction task following \citet{harutyunyan2019multitask}. We select the first ICU stay longer than 48 hours for patients in MIMIC-III\citep{johnson2016mimic}, and predict in-hospital mortality for this visit. As features, we include periodic lab and vital measurements used by \citet{harutyunyan2019multitask} into four 12-hour time-bins, and compute the mean in each time-bin. This results in a cohort of 21,139 patients where $\prob{y_i = +1} = 13.2\%.$

\paragraph{\textds{saps}} The Simplified Acute Physiology Score II (SAPS II) is a risk score developed to predict ICU mortality~\cite{le1993new}. This study contains a cohort of critically-ill patients from 137 medical centers across 12 countries. For each patient we have access to demographics, comorbidities, and vitals which are used to predict the risk of mortality in the ICU. The final dataset contains 7,797 patients where percentage of patients in the dataset who experience mortality is $\prob{y_i = +1} = 21.8\%$.

\section{Additional Experimental Results}
\label{Appendix::Experiments}

We include additional results showing the gains of personalization when training personalized neural nets and random forests. We present tables that summarize the gains of personalization for neural networks and random forests. The following tables are analogous to \cref{Table::Results}, except that they also include results for the \textds{kidney} dataset in \cref{Sec::Demonstration}. 

\paragraph{Neural Nets}
We trained neural networks with two hidden layers of size 5 and 2 and learning rate of $1^{-3}$. We applied Platt scaling~\cite{platt1999probabilistic} to ensure that the models assigned calibrated probabilities. As in \cref{Sec::ExperimentalResults} and \cref{Sec::Demonstration}, we can identify significant fair use violations and gains as noted by the gains and violations. 

\begin{table}[!htb]
\centering
\resizebox{\linewidth}{!}{
\fontsize{8}{10}\selectfont
\begin{tabular}[t]{@{}l@{}r@{}H@{}H@{}H@{}lllllllll}
\multicolumn{2}{l}{ } & \multicolumn{3}{c}{} & \multicolumn{3}{c}{Test Error} & \multicolumn{3}{c}{Test AUC} & \multicolumn{3}{c}{Test ECE} \\
\cmidrule(l{3pt}r{3pt}){3-5} \cmidrule(l{3pt}r{3pt}){6-8} \cmidrule(l{3pt}r{3pt}){9-11} \cmidrule(l{3pt}r{3pt}){12-14}
\cmidrule(l{-2pt}r{3pt}){6-8} \cmidrule(l{3pt}r{3pt}){9-11} \cmidrule(l{3pt}r{3pt}){12-14}

Dataset & Metrics & 
\onehot{} & \intonehot{} & \dcp{}  & 
\onehot{} & \intonehot{} & \dcp{}  & 
\onehot{} & \intonehot{} & \dcp{} & 
\onehot{} & \intonehot{} & \dcp{}\\

\cmidrule(l{-2pt}r{-2pt}){1-2} \cmidrule(l{-2pt}r{3pt}){6-8} \cmidrule(l{3pt}r{3pt}){9-11} \cmidrule(l{3pt}r{3pt}){12-14}
\apnea{} & \metricsguide{} & \cell{r}{0.631\\-0.012\\0.079 / -0.021\\2/2\\5/4} & \cell{r}{0.689\\-0.070\\0.008 / -0.099\\0/0\\6/6} & \cell{r}{0.671\\-0.053\\-0.024 / -0.188\\0/0\\3/3} & \cell{r}{35.0\%\\-1.7\%\\15.7\% / -4.5\%\\4/3\\3/0} & \cell{r}{48.4\%\\-15.1\%\\-6.1\% / -34.4\%\\6/6\\0/0} & \cell{r}{41.5\%\\-8.1\%\\-2.2\% / -50.5\%\\6/6\\2/1} & \cell{r}{0.704\\-0.012\\0.097 / -0.040\\2/2\\4/5} & \cell{r}{0.502\\-0.215\\-0.052 / -0.496\\0/0\\6/6} & \cell{r}{0.622\\-0.095\\-0.068 / -0.328\\0/0\\4/4} & \cell{r}{4.8\%\\0.8\%\\8.0\% / -9.5\%\\2/2\\1/1} & \cell{r}{2.4\%\\3.2\%\\25.2\% / 3.3\%\\0/0\\0/0} & \cell{r}{5.3\%\\0.4\%\\9.8\% / -5.7\%\\2/2\\1/1}\\
\cmidrule(l{-2pt}r{-2pt}){1-2} \cmidrule(l{-2pt}r{3pt}){6-8} \cmidrule(l{3pt}r{3pt}){9-11} \cmidrule(l{3pt}r{3pt}){12-14}

\cshocke{} & \metricsguide{} & \cell{r}{0.607\\0.006\\0.060 / -0.017\\3/3\\3/3} & \cell{r}{0.608\\0.004\\0.068 / -0.019\\1/1\\4/4} & \cell{r}{0.633\\-0.021\\0.006 / -0.038\\0/0\\1/1} & \cell{r}{31.5\%\\1.6\%\\8.4\% / -0.5\%\\0/0\\2/0} & \cell{r}{31.8\%\\1.3\%\\5.5\% / -1.3\%\\2/1\\3/0} & \cell{r}{36.6\%\\-3.5\%\\0.0\% / -10.3\%\\3/3\\2/2} & \cell{r}{0.739\\0.001\\0.067 / -0.003\\3/3\\4/4} & \cell{r}{0.738\\-0.001\\0.029 / -0.012\\1/1\\3/4} & \cell{r}{0.687\\-0.051\\-0.000 / -0.091\\0/0\\2/2} & \cell{r}{4.5\%\\2.3\%\\2.6\% / -1.2\%\\1/1\\2/2} & \cell{r}{5.5\%\\1.4\%\\2.4\% / -1.9\%\\1/1\\1/1} & \cell{r}{5.4\%\\1.5\%\\5.4\% / -2.8\%\\2/2\\1/1}\\
\cmidrule(l{-2pt}r{-2pt}){1-2} \cmidrule(l{-2pt}r{3pt}){6-8} \cmidrule(l{3pt}r{3pt}){9-11} \cmidrule(l{3pt}r{3pt}){12-14}

\cshockm{} & \metricsguide{} & \cell{r}{0.492\\0.006\\0.014 / -0.005\\3/3\\4/4} & \cell{r}{0.491\\0.007\\0.016 / -0.000\\3/3\\2/2} & \cell{r}{0.498\\-0.000\\0.011 / -0.008\\2/2\\0/0} & \cell{r}{23.7\%\\0.6\%\\2.0\% / -1.1\%\\1/1\\1/1} & \cell{r}{24.0\%\\0.2\%\\2.3\% / -2.4\%\\2/2\\0/0} & \cell{r}{23.9\%\\0.4\%\\1.4\% / -1.3\%\\2/2\\3/3} & \cell{r}{0.849\\0.004\\0.018 / -0.005\\3/3\\3/3} & \cell{r}{0.849\\0.004\\0.012 / -0.000\\3/3\\3/3} & \cell{r}{0.836\\-0.009\\0.003 / -0.015\\1/1\\0/0} & \cell{r}{3.1\%\\1.1\%\\2.1\% / -0.4\%\\1/1\\0/0} & \cell{r}{4.7\%\\-0.4\%\\1.4\% / -2.3\%\\2/2\\0/0} & \cell{r}{3.3\%\\1.0\%\\2.5\% / -0.2\%\\0/0\\1/1}\\
\cmidrule(l{-2pt}r{-2pt}){1-2} \cmidrule(l{-2pt}r{3pt}){6-8} \cmidrule(l{3pt}r{3pt}){9-11} \cmidrule(l{3pt}r{3pt}){12-14}

\heart{} & \metricsguide{} & \cell{r}{0.791\\-0.110\\-0.032 / -0.450\\0/0\\4/4} & \cell{r}{0.644\\0.037\\0.126 / -0.044\\2/2\\3/3} & \cell{r}{0.815\\-0.134\\0.105 / -0.508\\1/1\\4/4} & \cell{r}{50.0\%\\1.3\%\\12.0\% / -12.8\%\\2/1\\2/1} & \cell{r}{26.3\%\\25.0\%\\29.7\% / 16.6\%\\0/0\\2/1} & \cell{r}{38.2\%\\13.2\%\\28.1\% / 7.1\%\\0/0\\3/1} & \cell{r}{0.451\\-0.096\\0.046 / -0.387\\1/1\\1/4} & \cell{r}{0.771\\0.225\\0.393 / 0.119\\4/4\\0/3} & \cell{r}{0.554\\0.007\\0.257 / -0.023\\1/2\\1/2} & \cell{r}{21.3\%\\-7.8\%\\-0.1\% / -27.2\%\\3/3\\1/1} & \cell{r}{19.5\%\\-5.9\%\\16.8\% / -5.7\%\\1/1\\0/0} & \cell{r}{18.1\%\\-4.5\%\\6.2\% / -14.8\%\\1/1\\1/1}\\
\cmidrule(l{-2pt}r{-2pt}){1-2} \cmidrule(l{-2pt}r{3pt}){6-8} \cmidrule(l{3pt}r{3pt}){9-11} \cmidrule(l{3pt}r{3pt}){12-14}

\kidney{} & \metricsguide{} & \cell{r}{0.588\\-0.019\\0.004 / -0.134\\1/1\\6/5} & \cell{r}{0.579\\-0.010\\0.036 / -0.103\\1/1\\3/3} & \cell{r}{0.603\\-0.034\\0.004 / -0.262\\1/1\\2/2} & \cell{r}{29.5\%\\-2.3\%\\1.2\% / -7.8\%\\5/4\\3/0} & \cell{r}{31.7\%\\-4.5\%\\5.2\% / -6.8\%\\5/5\\1/0} & \cell{r}{30.9\%\\-3.7\%\\-1.6\% / -16.3\%\\6/6\\4/4} & \cell{r}{0.758\\-0.013\\0.047 / -0.144\\2/2\\4/6} & \cell{r}{0.774\\0.004\\0.049 / -0.103\\4/4\\3/5} & \cell{r}{0.762\\-0.009\\0.032 / -0.135\\2/2\\2/2} & \cell{r}{5.6\%\\0.3\%\\4.6\% / -7.8\%\\2/2\\1/0} & \cell{r}{6.8\%\\-0.9\%\\1.9\% / -5.6\%\\4/4\\0/0} & \cell{r}{7.3\%\\-1.4\%\\1.0\% / -5.9\%\\5/5\\3/3}\\
\cmidrule(l{-2pt}r{-2pt}){1-2} \cmidrule(l{-2pt}r{3pt}){6-8} \cmidrule(l{3pt}r{3pt}){9-11} \cmidrule(l{3pt}r{3pt}){12-14}

\mortality{} & \metricsguide{} & \cell{r}{0.447\\-0.002\\0.095 / -0.030\\4/4\\4/0} & \cell{r}{0.457\\-0.012\\0.000 / -0.042\\0/0\\2/2} & \cell{r}{0.409\\0.036\\0.226 / 0.010\\6/6\\0/0} & \cell{r}{20.4\%\\0.1\%\\5.2\% / -1.7\%\\2/2\\5/1} & \cell{r}{21.6\%\\-1.1\%\\-0.6\% / -3.2\%\\6/6\\2/2} & \cell{r}{17.7\%\\2.8\%\\12.9\% / 0.0\%\\0/0\\6/6} & \cell{r}{0.870\\-0.003\\0.032 / -0.018\\3/3\\0/4} & \cell{r}{0.869\\-0.004\\-0.000 / -0.022\\0/0\\4/4} & \cell{r}{0.895\\0.022\\0.042 / 0.005\\6/6\\0/0} & \cell{r}{2.8\%\\0.6\%\\2.7\% / -0.8\%\\3/3\\4/1} & \cell{r}{4.7\%\\-1.3\%\\2.9\% / -1.8\%\\3/3\\3/3} & \cell{r}{3.0\%\\0.5\%\\8.3\% / 0.1\%\\0/0\\6/6}\\
\cmidrule(l{-2pt}r{-2pt}){1-2} \cmidrule(l{-2pt}r{3pt}){6-8} \cmidrule(l{3pt}r{3pt}){9-11} \cmidrule(l{3pt}r{3pt}){12-14}

\saps{} & \metricsguide{} & \cell{r}{1.185\\0.114\\0.274 / -0.018\\3/3\\3/3} & \cell{r}{0.477\\0.822\\0.956 / 0.285\\4/4\\2/2} & \cell{r}{0.967\\0.332\\0.693 / -3.656\\3/3\\3/3} & \cell{r}{53.9\%\\7.7\%\\13.1\% / 0.0\%\\2/0\\4/1} & \cell{r}{22.5\%\\39.0\%\\54.8\% / 1.4\%\\1/0\\3/0} & \cell{r}{48.9\%\\12.7\%\\22.0\% / 0.0\%\\1/0\\3/2} & \cell{r}{0.521\\0.328\\0.727 / 0.197\\4/4\\1/3} & \cell{r}{0.872\\0.679\\0.757 / 0.638\\4/4\\1/3} & \cell{r}{0.758\\0.565\\0.743 / -0.273\\3/3\\3/4} & \cell{r}{43.6\%\\1.7\%\\13.2\% / 1.6\%\\0/0\\0/0} & \cell{r}{9.4\%\\36.0\%\\45.1\% / -2.9\%\\1/1\\1/1} & \cell{r}{31.5\%\\13.9\%\\49.9\% / 6.4\%\\0/0\\1/1}\\
\cmidrule(l{-2pt}r{-2pt}){1-2} \cmidrule(l{-2pt}r{3pt}){6-8} \cmidrule(l{3pt}r{3pt}){9-11} \cmidrule(l{3pt}r{3pt}){12-14}
\end{tabular}}
\caption{Gains of personalization for neural network models on test data.}
\end{table}

\FloatBarrier
\paragraph{Random Forests}
We trained random forests with the following hyperparameters: 100 estimators, max depth of 20, minimum samples per split is 5, and minimum number of samples in each leaf is 2. We expect these models to perform well in terms of error rare but not necessarily in terms in terms of AUC or risk calibration. We observe this effect in the Table below. For example, using an intersectional encoding with random forests minimizing fair use violations in terms of error rate as measured on multiple datasets (e.g. \textds{apnea}, \textds{kidney}). As noted with other model classes, we can find  statistically significant violations.

\begin{table}[!htb]
\centering
\resizebox{\linewidth}{!}{
\fontsize{8}{10}\selectfont
\begin{tabular}{@{}l@{}r@{}H@{}H@{}H@{}lllllllll}
\multicolumn{2}{l}{ } & \multicolumn{3}{c}{} & \multicolumn{3}{c}{Test Error} & \multicolumn{3}{c}{Test AUC} & \multicolumn{3}{c}{Test ECE} \\
\cmidrule(l{3pt}r{3pt}){3-5} \cmidrule(l{3pt}r{3pt}){6-8} \cmidrule(l{3pt}r{3pt}){9-11} \cmidrule(l{3pt}r{3pt}){12-14}
\cmidrule(l{-2pt}r{3pt}){6-8} \cmidrule(l{3pt}r{3pt}){9-11} \cmidrule(l{3pt}r{3pt}){12-14}

Dataset & Metrics & 
\onehot{} & \intonehot{} & \dcp{}  & 
\onehot{} & \intonehot{} & \dcp{}  & 
\onehot{} & \intonehot{} & \dcp{} & 
\onehot{} & \intonehot{} & \dcp{}\\

\cmidrule(l{-2pt}r{-2pt}){1-2} \cmidrule(l{-2pt}r{3pt}){6-8} \cmidrule(l{3pt}r{3pt}){9-11} \cmidrule(l{3pt}r{3pt}){12-14}

\apnea() & \metricsguide{} & \cell{r}{0.602\\-0.004\\0.010 / -0.012\\1/1\\6/3} & \cell{r}{0.602\\-0.005\\0.000 / -0.009\\0/0\\1/1} & \cell{r}{0.550\\0.048\\0.112 / -0.010\\5/5\\0/0} & \cell{r}{29.7\%\\1.8\%\\4.7\% / -4.4\%\\2/2\\3/0} & \cell{r}{31.0\%\\0.3\%\\1.4\% / -3.8\%\\4/1\\3/1} & \cell{r}{26.5\%\\5.3\%\\17.0\% / -6.0\%\\1/1\\5/5} & \cell{r}{0.751\\-0.004\\0.061 / -0.021\\1/2\\2/6} & \cell{r}{0.757\\-0.001\\0.019 / -0.015\\1/2\\2/4} & \cell{r}{0.815\\0.055\\0.104 / -0.008\\5/5\\1/1} & \cell{r}{8.2\%\\-2.3\%\\2.2\% / -3.9\%\\3/3\\0/0} & \cell{r}{7.2\%\\-1.1\%\\2.1\% / -2.0\%\\3/3\\1/1} & \cell{r}{8.0\%\\-1.0\%\\2.8\% / -2.6\%\\2/2\\0/0}\\
\cmidrule(l{-2pt}r{-2pt}){1-2} \cmidrule(l{-2pt}r{3pt}){6-8} \cmidrule(l{3pt}r{3pt}){9-11} \cmidrule(l{3pt}r{3pt}){12-14}

\cshocke{} & \metricsguide{} & \cell{r}{0.597\\0.000\\0.006 / -0.005\\2/2\\2/2} & \cell{r}{0.598\\-0.001\\0.005 / -0.006\\2/2\\2/2} & \cell{r}{0.564\\0.032\\0.166 / 0.004\\4/4\\0/0} & \cell{r}{30.8\%\\0.4\%\\3.6\% / -3.6\%\\2/2\\1/0} & \cell{r}{30.5\%\\-0.3\%\\0.0\% / -0.9\%\\4/1\\2/2} & \cell{r}{27.1\%\\3.9\%\\16.4\% / 0.4\%\\0/0\\4/4} & \cell{r}{0.770\\0.003\\0.016 / -0.008\\2/2\\2/3} & \cell{r}{0.769\\0.003\\0.013 / -0.012\\2/2\\1/1} & \cell{r}{0.801\\0.032\\0.121 / 0.007\\4/4\\0/0} & \cell{r}{8.0\%\\-0.5\%\\0.7\% / -2.0\%\\3/3\\1/1} & \cell{r}{8.5\%\\-0.3\%\\0.5\% / -0.0\%\\0/0\\1/1} & \cell{r}{9.4\%\\-1.9\%\\4.2\% / -1.4\%\\2/2\\0/0}\\
\cmidrule(l{-2pt}r{-2pt}){1-2} \cmidrule(l{-2pt}r{3pt}){6-8} \cmidrule(l{3pt}r{3pt}){9-11} \cmidrule(l{3pt}r{3pt}){12-14}

\cshockm{} & \metricsguide{} & \cell{r}{0.523\\-0.003\\-0.002 / -0.004\\0/0\\1/1} & \cell{r}{0.524\\-0.006\\-0.002 / -0.008\\0/0\\2/2} & \cell{r}{0.500\\0.019\\0.046 / 0.009\\4/4\\0/0} & \cell{r}{24.0\%\\-0.3\%\\0.9\% / -1.3\%\\2/2\\2/0} & \cell{r}{23.7\%\\0.3\%\\0.6\% / -0.1\%\\1/0\\3/3} & \cell{r}{20.9\%\\2.9\%\\5.8\% / 1.1\%\\0/0\\4/4} & \cell{r}{0.849\\0.001\\0.003 / -0.002\\3/3\\2/2} & \cell{r}{0.850\\-0.002\\0.004 / -0.004\\1/1\\1/1} & \cell{r}{0.871\\0.023\\0.047 / 0.007\\4/4\\0/0} & \cell{r}{10.0\%\\-0.8\%\\0.5\% / -1.7\%\\3/3\\0/0} & \cell{r}{11.0\%\\-0.9\%\\-0.0\% / -1.6\%\\3/3\\1/1} & \cell{r}{11.7\%\\-2.3\%\\-0.1\% / -4.6\%\\3/3\\0/0}\\
\cmidrule(l{-2pt}r{-2pt}){1-2} \cmidrule(l{-2pt}r{3pt}){6-8} \cmidrule(l{3pt}r{3pt}){9-11} \cmidrule(l{3pt}r{3pt}){12-14}

\heart{} & \metricsguide{} & \cell{r}{0.425\\-0.004\\-0.001 / -0.011\\0/0\\1/1} & \cell{r}{0.427\\0.001\\0.012 / -0.020\\2/2\\1/1} & \cell{r}{0.372\\0.056\\0.137 / -0.061\\3/3\\0/0} & \cell{r}{18.4\%\\1.3\%\\5.9\% / 0.0\%\\3/0\\4/0} & \cell{r}{21.1\%\\2.6\%\\10.8\% / 0.0\%\\3/0\\3/1} & \cell{r}{21.1\%\\-1.3\%\\16.3\% / -18.6\%\\3/2\\1/1} & \cell{r}{0.899\\0.001\\0.004 / -0.067\\0/3\\0/4} & \cell{r}{0.896\\-0.000\\0.016 / -0.063\\1/3\\1/4} & \cell{r}{0.936\\0.035\\0.094 / 0.001\\3/4\\0/0} & \cell{r}{9.2\%\\2.0\%\\7.1\% / -3.7\%\\2/2\\2/2} & \cell{r}{10.6\%\\4.5\%\\4.3\% / -4.6\%\\2/2\\4/4} & \cell{r}{13.5\%\\1.1\%\\11.3\% / -13.8\%\\2/2\\2/2}\\
\cmidrule(l{-2pt}r{-2pt}){1-2} \cmidrule(l{-2pt}r{3pt}){6-8} \cmidrule(l{3pt}r{3pt}){9-11} \cmidrule(l{3pt}r{3pt}){12-14}

\kidney{} & \metricsguide{} & \cell{r}{0.593\\0.001\\0.004 / -0.011\\4/4\\4/4} & \cell{r}{0.595\\-0.002\\0.005 / -0.005\\1/1\\2/2} & \cell{r}{0.529\\0.065\\0.125 / 0.033\\6/6\\0/0} & \cell{r}{30.1\%\\-0.4\%\\0.5\% / -3.4\%\\4/2\\5/0} & \cell{r}{30.5\%\\-1.2\%\\0.0\% / -3.4\%\\6/4\\1/0} & \cell{r}{22.3\%\\7.8\%\\17.8\% / 3.1\%\\0/0\\6/6} & \cell{r}{0.773\\-0.003\\0.008 / -0.022\\2/2\\3/5} & \cell{r}{0.773\\-0.005\\0.015 / -0.008\\2/2\\3/3} & \cell{r}{0.860\\0.083\\0.143 / 0.062\\6/6\\0/0} & \cell{r}{7.5\%\\0.8\%\\1.8\% / -1.8\%\\3/3\\2/0} & \cell{r}{7.5\%\\1.3\%\\1.7\% / -2.6\%\\1/1\\0/0} & \cell{r}{13.2\%\\-5.1\%\\-1.7\% / -8.1\%\\6/6\\0/0}\\
\cmidrule(l{-2pt}r{-2pt}){1-2} \cmidrule(l{-2pt}r{3pt}){6-8} \cmidrule(l{3pt}r{3pt}){9-11} \cmidrule(l{3pt}r{3pt}){12-14}

\addlinespace
\mortality{} & \metricsguide{} & \cell{r}{0.578\\-0.004\\-0.003 / -0.008\\0/0\\5/1} & \cell{r}{0.576\\-0.000\\0.004 / -0.007\\3/3\\6/0} & \cell{r}{0.548\\0.029\\0.299 / 0.009\\6/6\\0/0} & \cell{r}{27.1\%\\0.2\%\\0.8\% / -1.1\%\\4/4\\5/0} & \cell{r}{26.9\%\\0.4\%\\1.0\% / -0.7\%\\2/2\\6/0} & \cell{r}{24.6\%\\2.4\%\\22.2\% / 0.2\%\\0/0\\6/6} & \cell{r}{0.803\\-0.004\\0.004 / -0.011\\1/1\\0/6} & \cell{r}{0.806\\0.002\\0.009 / -0.011\\3/3\\0/6} & \cell{r}{0.841\\0.035\\0.186 / 0.013\\6/6\\0/0} & \cell{r}{11.0\%\\-0.5\%\\0.2\% / -1.2\%\\3/3\\3/0} & \cell{r}{10.9\%\\-0.6\%\\0.3\% / -1.3\%\\4/4\\6/0} & \cell{r}{12.0\%\\-0.9\%\\0.7\% / -7.8\%\\4/4\\0/0}\\
\cmidrule(l{-2pt}r{-2pt}){1-2} \cmidrule(l{-2pt}r{3pt}){6-8} \cmidrule(l{3pt}r{3pt}){9-11} \cmidrule(l{3pt}r{3pt}){12-14}

\saps{} & \metricsguide{} & \cell{r}{0.447\\-0.002\\0.012 / -0.006\\2/2\\2/1} & \cell{r}{0.448\\-0.003\\0.010 / -0.003\\3/3\\3/3} & \cell{r}{0.441\\0.006\\0.059 / -0.068\\1/1\\2/2} & \cell{r}{19.6\%\\0.0\%\\0.1\% / -0.2\%\\3/1\\2/1} & \cell{r}{19.8\%\\0.0\%\\22.2\% / -0.4\%\\1/1\\3/1} & \cell{r}{19.1\%\\0.6\%\\4.6\% / 0.0\%\\1/0\\3/3} & \cell{r}{0.880\\-0.001\\0.000 / -0.001\\0/2\\0/2} & \cell{r}{0.880\\0.000\\0.002 / -0.023\\1/2\\1/3} & \cell{r}{0.882\\0.002\\0.023 / -0.187\\2/2\\1/2} & \cell{r}{5.0\%\\-0.5\%\\1.9\% / -0.6\%\\2/2\\1/0} & \cell{r}{4.9\%\\-0.5\%\\0.3\% / -1.0\%\\2/2\\1/1} & \cell{r}{4.7\%\\0.3\%\\8.7\% / -0.4\%\\1/1\\3/3}\\
\cmidrule(l{-2pt}r{-2pt}){1-2} \cmidrule(l{-2pt}r{3pt}){6-8} \cmidrule(l{3pt}r{3pt}){9-11} \cmidrule(l{3pt}r{3pt}){12-14}
\end{tabular}}
\caption{Performance of personalized random forests models on all datasets. We describe the metrics shown for each model and dataset in \cref{Table::Results}.}
\end{table}

\end{document}